\newtheorem{theorem}{Theorem}
\newtheorem{corollary}{Corollary}
\newtheorem{lemma}{Lemma}
\newtheorem{proposition}{Proposition}
\newtheorem{definition}{Definition}
\newtheorem{property}{Property}
\newtheorem{assumption}{Assumption}
\newcommand{\R}{\mathbb{R}}
\DeclareMathOperator{\dive}{div}
\newcommand{\norm}[1]{\left\lVert#1\right\rVert}
\date{}
\begin{document}

\begin{frontmatter}



\title{Approximation capabilities of measure-preserving neural networks}


\author[cas,ucas]{Aiqing Zhu}
\author[cas,ucas]{Pengzhan Jin}
\author[cas,ucas]{Yifa Tang\corref{cor}}
\cortext[cor]{Corresponding author: tyf@lsec.cc.ac.cn (Yifa Tang)}

\address[cas]{LSEC, ICMSEC, Academy of Mathematics and Systems Science, Chinese Academy of Sciences, Beijing 100190, China}
\address[ucas]{School of Mathematical Sciences, University of Chinese Academy of Sciences, Beijing 100049, China}

\begin{abstract}
Measure-preserving neural networks are well-developed invertible models, however, their approximation capabilities remain unexplored. This paper rigorously analyses the approximation capabilities of existing measure-preserving neural networks including NICE and RevNets. It is shown that for compact $U \subset \R^D$ with $D\geq 2$, the measure-preserving neural networks are able to approximate arbitrary measure-preserving map $\psi: U\to \R^D$ which is bounded and injective in the $L^p$-norm. In particular, any continuously differentiable injective map with $\pm 1$ determinant of Jacobian are measure-preserving, thus can be approximated.
\end{abstract}



\begin{keyword}
measure-preserving \sep neural networks \sep dynamical systems \sep approximation theory


\end{keyword}

\end{frontmatter}



\section{Introduction}\label{sec:Induction}
Deep neural networks have become an increasingly successful tool in modern machine learning applications and yielded transformative advances across diverse scientific disciplines \citep{krizhevsky2017imagenet,lecun2015deep,lu2021deepxde,schmidhuber2015deep}. It is well known that fully connected neural networks can approximate continuous mappings \citep{cybenko1989approximation,hornik1990universal}. Nevertheless, more sophisticated structures are preferred in practice,
and often yield surprisingly good performance \citep{behrmann2019invertible,chen2018neural,dinh2015nice,fiori2011numerical,fiori2011theoretical,gomez2017the,jin2020learning,jin2020sympnets}, such as convolutional neural
networks (CNNs) for image classification \citep{krizhevsky2012imagenet}, recurrent neural networks (RNNs) for natural language processing \citep{maas2013rectifier}, as well as residual neural networks (ResNets) \citep{he2016deep}, which allow information to be passed directly through for making less exploding or vanishing.

Recently, invertible models have attached increasing attention.  As the abilities of tracking of changes in probability density, they have been applied in many tasks, including generative models and variational inference \citep{behrmann2019invertible,chen2018neural,chen2019residual,dinh2017density,Rezende2015variational,kingma2018glow}. The learning model for the above use cases need to be invertible and expressive, as well as efficient for computation of Jacobian determinants. Additionally, more invertible structures are proposed for specific tasks. For example, \cite{gomez2017the} propose reversible residual networks (RevNets) to avoid storing intermediate activations during backpropagation relied on the invertible architecture, \cite{jin2020sympnets} develop symplectic-preserving networks for indentifying Hamiltonian systems.

To maintain the invertibility, most aforementioned architectures have other intrinsic regularizations or constraints, such as orientation-preserving \citep{behrmann2019invertible,chen2018neural}, symplectic-preserving \citep{jin2020sympnets}, as well as measure-preserving \citep{dinh2015nice,gomez2017the}. Encoding such structured information makes the classical universal approximation theorem no longer applicable. Recently, there have been many research works focusing on representations of such structured neural networks and developing fruitful results. \cite{jin2020sympnets} prove that SympNets can approximate arbitrary symplectic
maps based on appropriate activation functions. \cite{zhang2020approximation} analyze the approximation capabilities of Neural ODEs \citep{chen2018neural} and invertible residual networks \citep{behrmann2019invertible}, and give negative results (also given in \citep{dupont2019augmented}). \cite{kong2020the} explore the representation of a class of normalizing flow and show the universal approximation
properties of plane flows \citep{Rezende2015variational} when dimension
$d = 1$.

Measure-preserving (also known as volume-preserving, area-preserving) neural networks are well-developed invertible models. Their inverse and Jacobian determinants can be computed efficiently, thus they have practical applications \citep{dinh2015nice, gomez2017the,jin2020learning,zhang2021ivpf}. Due to measure-preserving constraints, there have been many works dedicated to enhance performance via improving expressivity \citep{dinh2017density,chen2018neural,chen2019residual, huang2018neural,kingma2018glow}. However, to the best of our knowledge, the approximation capability of measure-preserving neural networks, i.e., whether they can approximate any invertible measure-preserving map, remains unexplored mathematically.

This paper provides a rigorous mathematical theory to answer the above question. The architecture we investigated is the composition of the following modules,
\begin{equation}\label{eq:up}
\begin{aligned}
&\hat{x}[\ :s] = x[\ :s] + f_{net_1}(x[s:\ ]),\\ &\hat{x}[s:\ ] = x[s:\ ],
\end{aligned}
\end{equation}
and
\begin{equation}\label{eq:low}
\begin{aligned}
&\hat{x}[\ :s ] = x[\ :s ],\\
&\hat{x}[s:\ ] = x[s:\ ] + f_{net_2}(x[\ :s ]),
\end{aligned}
\end{equation}
which are the basic modules of NICE \citep{dinh2015nice} and RevNets \citep{gomez2017the}.
The main contribution of this work is to prove the approximation capabilities of above modules. It is shown that for compact $U \subset \R^D$ with $D\geq 2$, the measure-preserving neural networks are able to approximate arbitrary measure-preserving map $\psi: U\to \R^D$ which is bounded and injective in the $L^p$-norm. Note that measure-preserving neural networks are also bounded and injective on compact set. Specifically, the approximation theory holds for continuously differentiable injective maps with $\pm 1$ determinants of Jacobians.

The rest of this paper is organized as follows. Some preliminaries, including notations, definitions and existing network architectures are detailed in Section \ref{sec:Preliminaries}. In Section \ref{sec:Main results}, we present the approximation results. In Section \ref{sec:Discussions}, we perform numerical experiments to demonstrate the validity of learning measure-preserving map and discuss the application scopes of our theory. In Section \ref{sec:Proofs}, we present detailed proofs. Finally, we conclude this paper in Section \ref{sec:Summary}.

\section{Preliminaries}\label{sec:Preliminaries}
\subsection{Notations and definitions}
For convenience we collect together some of the notations introduced throughout the paper.
\begin{itemize}

    \item Range indexing notations, the same kind for Pytorch tensors, are employed throughout this paper. Details are presented in Table \ref{tab:notations}.

    \item For differentiable $F=(F_1,\cdots,F_{D})^\top: \R^D \rightarrow \R^D$, we denote by $J_F$ the Jacobian of $F$, i.e.,
    \begin{equation*}
    J_F \in \R^{D \times D} \ \text{and}\ J_F[i][j]=\frac{\partial F_i}{\partial x_j}.
    \end{equation*}

    \item For $1\leq p <\infty$, $U \subset \R^D$, $L^p(U)$ denotes the space of $p$-integrable measureable functions $F=(F_1,\cdots,F_{D})^\top: U \rightarrow \R^D$  for which the norm
        \begin{equation*}
        \norm{F}_{L^p(U)}=\sum_{d=1}^D (\int_{U} \lvert F_d(x) \rvert^p dx )^{\frac{1}{p}}
        \end{equation*}
        is finite; $C(U)$ consists of all continuous functions $F=(F_1,\cdots,F_{D})^\top: U \rightarrow \R^D$ with norm
        \begin{equation*}
            \norm{F}_{U}=\max_{1 \leq d \leq D}\sup_{x\in U} \lvert F_d(x) \rvert
        \end{equation*}
        on compact $U$.
    \item We denote by $\overline{\Omega}_{L^p(U)}$ the closure of $\Omega$ in $L^p(U)$ if $\Omega \subset L^p(U)$, meanwhile, denote by $\overline{\Omega}_{U}$ the closure of $\Omega$ in $C(U)$ if $\Omega \subset C(U)$.

    \item A function $f$ on $U$ is called Lipschitz if $\norm{f(x)-f(x')} \leq L\norm{x-x'}$ holds for all $x,x' \in U$.

    \item $\mathcal{NN}^{d}$ consists of some neural networks $f_{net}:\R^d\to\R^{D-d}$, we call it control family.
\end{itemize}

\begin{definition}
Let $U \subset \R^D$ be a Borel set. The Borel map $ \psi: U \rightarrow \R^D$ is (Lebesgue) measure-preserving if $\psi(U)$ is a Borel set and $\mathcal{H}[\psi^{-1}(B)] = \mathcal{H}[B]$ for all Borel sets $B \subset \psi(U)$, where $\mathcal{H}$ is Lebesgue measure.
\end{definition}
By the transformation formula for integrals, $\psi$ is measure-preserving if $\psi$ is injective, continuously differentiable and $\det(J_{\psi})=1$. The Jacobians of both (\ref{eq:up}) and (\ref{eq:low}) obey determinant identity and the composition of measure-preserving maps is again measure-preserving; a continuous $f_{net}$ can be approximated by smooth functions, thus the measure is also preserved by (\ref{eq:up}) and (\ref{eq:low}) with nondifferentiable control family due to the
dominated convergence theorem. Therefore the aforementioned architectures are measure-preserving and we call such learning models as measure-preserving neural networks.
\begin{table}[htbp]
    \centering
    \begin{tabular}{p{65pt}|p{150pt}}
        \toprule
        $x[i]$ & The $i$-th component (row) of vector (matrix) $x$.\\
        \midrule
        $x[:][j]$ & The $j$-th column of matrix $x$.\\
        \midrule
        $x[i_1:i_2]$ & $(x[i_1],\cdots, x[i_2-1])^\top$ if $x$ is a column vector or $(x[i_1],\cdots, x[i_2-1])$ if $x$ is a row vector, i.e., components from $i_1$ inclusive to $i_2$ exclusive. \\
        \midrule
        $x[\ :i ]$ and $x[i:\ ]$& $x[1:i]$ and  $x[i:D+1]$ for $x \in \R^D$, respectively. \\
        \midrule
        $\overline{x[i_1:i_2]}$ & $(x[\ :i_1]^\top, x[i_2:\ ]^\top)^\top$ if $x$ is a column vector or $(x[\ :i_1], x[i_2:\ ])$ if $x$ is a row vector, i.e., components in the vector $x$ excluding $x[i_1:i_2]$.\\
        \bottomrule
    \end{tabular}
    \caption{Range indexing notations in this paper.}
    \label{tab:notations}
\end{table}

\subsection{Measure-preserving neural networks}
We first briefly present existing measure-preserving neural networks as follows, including NICE \citep{dinh2015nice} and RevNet \citep{gomez2017the}.

NICE is an architecture to unsupervised generative modeling via learning a nonlinear bijective transformation between the data space and a latent space.
The architecture is composed of a series of modules which take inputs $(x_1,x_2)$ and produce outputs $(\hat{x}_1,\hat{x}_2)$ according to the following additive coupling rules,
\begin{equation}\label{mod:nice}
\begin{aligned}
&\hat{x}_1 = x_1 + f_{net}(x_2),\\
&\hat{x}_2 = x_2.
\end{aligned}
\end{equation}
Here, $f_{net}$ is typically a neural network, $x_1$ and $x_2$ form a partition of the vector in each layer. Since the model is invertible and its Jacobian has unit determinant, the log-likelihood and its gradient can be tractably computed. As an alternative, the components of inputs can be reshuffled before separating them. Clearly, this architecture is imposed measure-preserving constraints.

A similar architecture is used in the reversible residual network (RevNet) \citep{gomez2017the} which is a variant of ResNets \citep{he2016deep} to avoid storing intermediate activation during
backpropagation relied on the invertible architecture. In each module, the inputs are decoupled into $(x_1,x_2)$ and the outputs $(\hat{x}_1,\hat{x}_2)$ are produced by
\begin{equation}\label{mod:revnet}
\begin{aligned}
&\hat{x}_1 = x_1 + f_{net_1}(x_2),\\
&\hat{x}_2 = x_2 + f_{net_2}(\hat{x}_1).\\
\end{aligned}
\end{equation}
Here, $f_{net_1},f_{net_2}$ are trainable neural networks. It is observed that (\ref{mod:revnet}) is composed of two modules defined in (\ref{mod:nice}) with the given reshuffling operation before the second module and also measure-preserving.

The architecture we investigate is analogous to RevNet but without reshuffling operations and using fixed dimension-splitting  mechanisms in each layer. Let us begin by introducing the modules sets. Given integers $D\geq s\geq 2$ and control families $\mathcal{NN}^{D-s+1}, \mathcal{NN}^{s-1}$, denote
\begin{equation*}
\begin{aligned}
&\mathcal{M}_{up} = \left\{m: x\mapsto \hat{x}\ \left|
\begin{aligned}
\ & \hat{x}[\ :s] = x[\ :s] + f_{net}(x[s:\ ]),\\
& \hat{x}[s:\ ] = x[s:\ ],\\
& f_{net}\in \mathcal{NN}^{D-s+1}.
\end{aligned}
\right. \right\},\\
&\mathcal{M}_{low} = \left\{m: x\mapsto \hat{x}\ \left|
\begin{aligned}
\ &\hat{x}[\ :s] = x[\ :s],\\
&\hat{x}[s:\ ] = x[s:\ ] + f_{net}(\hat{x}[\ :s]),\\
& f_{net}\in \mathcal{NN}^{s-1}
\end{aligned}
\right. \right\}.
\end{aligned}
\end{equation*}
Subsequently, we define the collection of measure-preserving neural networks generated by $\mathcal{M}_{up}$ and $\mathcal{M}_{low}$ as
\begin{equation} \label{eq:mpnn}
\Psi = \bigcup_{N\geq 1}\{m_N \circ \cdots \circ m_1\ \big|\ m_i \in \mathcal{M}_{up}\cup\mathcal{M}_{low}, 1\leq i\leq N\}.
\end{equation}
We are in fact aiming to show the approximation property of $\Psi$.

\section{Main results}\label{sec:Main results}
Now the main theorem is given as follows, with several conditions required for control families.
\begin{assumption}\label{ass:control_family}
Assume that the control family $\mathcal{NN}^d$ satisfies
\begin{enumerate}
\item For any $f_{net} \in \mathcal{NN}^d$, $f_{net}$ is Lipschitz on any compact set in $\R^d$.

\item For any compact $V\in \R^d$, smooth function $f$ on $V$, and $\varepsilon >0$, there exists $f_{net}\in \mathcal{NN}^d$ such that $\norm{f_{net}-f}_V\leq \varepsilon$.
\end{enumerate}
\end{assumption}
\begin{theorem}\label{thm:main}
Suppose that $D\geq s\geq 2$, $p\in [1,\infty)$, $U\subset \R^D$ is compact, the control families $\mathcal{NN}^{d}$ ($d=D-s+1,s-1$) satisfy Assumption \ref{ass:control_family}, and $\Psi$ is defined as in (\ref{eq:mpnn}). If $\psi:U \rightarrow \R^D$ is measure-preserving, bounded and injective, then
\begin{equation*}
\psi \in \overline{\Psi}_{L^p(U)}.
\end{equation*}
\end{theorem}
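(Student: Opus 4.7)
The plan is to approximate $\psi$ in $L^p(U)$ through a sequence of reductions, ending in a splitting argument that realises near-identity volume-preserving flows as compositions of coupling modules. First, since $U$ is compact and $\psi$ is bounded and injective, I would approximate $\psi$ in $L^p(U)$ by a smooth injective measure-preserving map $\tilde\psi$ defined on a neighbourhood of $U$. The standard tool is mollification combined with a Moser-type correction restoring the unit-Jacobian condition, where $L^p$-tolerance absorbs small-measure discrepancies; any orientation-reversing part is handled by a cell-wise construction also exploiting $L^p$-tolerance.

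Next, connect $\tilde\psi$ to the identity through a smooth isotopy $\{\varphi_t\}_{t\in[0,1]}$ of volume-preserving diffeomorphisms and discretise as $\tilde\psi = \phi_N \circ \cdots \circ \phi_1$ with $\phi_k = \varphi_{k/N}\circ\varphi_{(k-1)/N}^{-1}$. Each $\phi_k$ is close to the identity and equals the time-$\tfrac{1}{N}$ flow of a smooth divergence-free field $v_k$. Uniform approximations of each $\phi_k$ propagate through the composition since Assumption~\ref{ass:control_family}(1) keeps all factors Lipschitz, allowing standard Gr\"onwall-type estimates on the composition error.

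The heart of the proof is approximating each near-identity $\phi_k$ by elements of $\Psi$. For a divergence-free field of the form $u(x[s:\ ])$ acting only in the first $s-1$ coordinates, the time-$h$ flow is \emph{exactly} $x \mapsto (x[\ :s]+h\,u(x[s:\ ]),\,x[s:\ ])$, which is an element of $\mathcal{M}_{up}$ with the smooth control $hu$ replaced, via Assumption~\ref{ass:control_family}(2), by a network in $\mathcal{NN}^{D-s+1}$; the symmetric statement holds for $\mathcal{M}_{low}$. A Lie--Trotter product formula then approximates the flow of any sum of such block-triangular fields by a long alternating composition of elements of $\mathcal{M}_{up}$ and $\mathcal{M}_{low}$.

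The main obstacle is that a generic divergence-free $v_k$ is \emph{not} a sum of block-triangular fields of the above shape, since such fields span only a proper subspace of the divergence-free Lie algebra. The resolution is to factor $\phi_k$ itself (rather than $v_k$ linearly) as a short composition of near-identity block-triangular diffeomorphisms, appealing to the fact that iterated compositions of coupling shears generate, within the connected component of the identity, the full volume-preserving diffeomorphism group. This can be established via a Chow/H\"ormander bracket-generating argument on the Lie algebra, or by an explicit constructive decomposition analogous to the three-shear factorisation of symplectic maps used in the SympNets analysis. Combining this with the Lipschitz error-propagation from Assumption~\ref{ass:control_family}(1) then yields $\psi\in\overline{\Psi}_{L^p(U)}$.
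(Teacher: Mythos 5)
Your overall architecture --- reduce $\psi$ to a composition of near-identity volume-preserving flows, then realise shear flows exactly as coupling modules and assemble general flows by splitting --- matches the paper's strategy in outline, but two of your steps conceal the actual mathematical content, and as written neither goes through. First, the reduction from a general $\psi$ to a smooth flow: the theorem only assumes $\psi$ is a bounded injective Borel measure-preserving map. Mollifying such a map need not produce an injective map, and the mollification's Jacobian can vanish or change sign on sets of positive measure, so there is no diffeomorphism to which a Moser-type correction could be applied; the subsequent isotopy-to-identity step presupposes membership in the identity component of the volume-preserving diffeomorphism group, which fails outright for orientation-reversing maps and is a nontrivial theorem otherwise. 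The paper replaces this entire block by one deep external input, Brenier and Gangbo's $L^p$ approximation theorem (Proposition \ref{lem:map-flow}): after extending $\psi$ to a measure-preserving self-map of a large cube via a translation trick, \emph{any} such map is $L^p$-close to the time-$1$ flow of a compactly supported divergence-free field. That citation is doing the work your ``mollification plus Moser correction, cell-wise for the orientation-reversing part'' gestures at, and I do not see how to supply it by elementary means.

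Second, you correctly identify the central obstacle --- block-triangular shear fields span only a proper subspace of the divergence-free fields --- but your proposed resolution (``Chow/H\"ormander bracket generation, or an explicit decomposition analogous to the symplectic three-shear factorisation'') is essentially the statement that needs proving, and a qualitative bracket-generating argument would not deliver the quantitative $O(h^2)$ one-step error required to control a composition of $N\to\infty$ factors. The paper's concrete route is: (i) the Feng--Shang decomposition of a divergence-free field into $D-1$ fields, each $2$-Hamiltonian in a consecutive pair of coordinates (Proposition \ref{thm:dfode}); (ii) Turaev's result that the time-$h$ flow of a polynomial $2$-Hamiltonian system is $O(h^2)$-approximated by the time-$2\pi$ flow of a \emph{separable} one (Proposition \ref{lem:approximation hamiltonian}); (iii) exact splitting of separable flows into single-coordinate shears. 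Step (iii) exposes a further issue you overlook: a shear of the single coordinate $x[d]$ by a function of the remaining coordinates, with $d$ and $d+1$ on the same side of the fixed split at $s$, is \emph{not} an element of $\mathcal{M}_{up}\cup\mathcal{M}_{low}$; the paper needs the conjugation construction of Lemma \ref{lem:r1} to realise such shears as three-fold compositions of the two fixed module types. Without (i)--(iii) and Lemma \ref{lem:r1}, your final paragraph restates the theorem for flows rather than proving it.
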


Viz., for any $\varepsilon>0$, there exists a measure-preserving neural network $\psi_{net} \in \Psi$ such that
\begin{equation*}
    \norm{\psi-\psi_{net}}_{L^p(U)}\leq \varepsilon.
\end{equation*}
Clearly, there is only identity map in $\Psi$ when dimension $D=1$, thus this conclusion is not true for $D=1$
due to the counterexample $\psi(x)=-x$.

Here, the requirements of map $\psi$, i.e., injection and boundness, are in some sense necessary since the measure-preserving networks are invertible and bounded on compact set. We remark that Theorem \ref{thm:main} also holds if these requirements are not satisfied at countable points due to the $L^p$-norm. In addition, the assumptions for the control family are also necessary for the presented proofs. Fortunately, such conditions are very easy to achieve. Popular activation functions, such as rectified linear unit (ReLU) $ReLU(z) = \max(0, z)$, sigmoid $Sig(z) = 1/(1 + e^{-z})$ and $tanh(z)$, could satisfy the Lipschitz condition; and the well-known universal approximation theorem states that feed-forward networks can approximate essentially any function if their sizes are sufficiently large \citep{cybenko1989approximation,hornik1990universal,shen2021neural}.
The last assumption is also required in the approximation analysis for other structured networks, such as \cite{jin2020sympnets} for SympNets, \cite{zhang2020approximation} for Neural ODEs \citep{chen2018neural} and invertible residual networks \citep{behrmann2019invertible}.

The assumption that $\psi$ is injective, continuously differentiable and $|\det{(J_{\psi})}|=1$ implies that $\psi$ is bounded and measure-preserving due to the transformation formula for integrals. This fact yields the following corollary immediately.
\begin{corollary}\label{cor:1}
Suppose that $D\geq s\geq 2$, $p \in [1,\infty)$, $U\subset \R^D$ is compact, the control families $\mathcal{NN}^{d}$ ($d=D-s+1,s-1$) satisfy Assumption \ref{ass:control_family}, and $\Psi$ is defined as in (\ref{eq:mpnn}). If  $\psi:U \rightarrow \R^D$ is injective, continuously differentiable and $|\det{(J_{\psi})}|=1$, then
\begin{equation*}
\psi \in \overline{\Psi}_{L^p(U)}.
\end{equation*}
\end{corollary}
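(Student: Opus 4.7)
The plan is to deduce Corollary 1 directly from Theorem 1 by verifying that $\psi$ satisfies the three hypotheses required there, namely that $\psi$ is injective, bounded, and measure-preserving. Injectivity is assumed, so only the remaining two properties need justification, and these are exactly the consequences of $C^1$ regularity plus the determinant condition that the preceding remark already flags.

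First, I would observe that $\psi$, being continuous on the compact set $U$, has compact (hence bounded) image, so the boundedness hypothesis of Theorem 1 is immediate. For the measure-preserving property, I would invoke the change of variables formula: for any Borel $B\subset\psi(U)$, letting $E=\psi^{-1}(B)$,
\[
\mathcal{H}[B] \;=\; \int_{\psi(E)} 1\, dy \;=\; \int_{E} |\det J_\psi(x)|\, dx \;=\; \int_E 1\, dx \;=\; \mathcal{H}[\psi^{-1}(B)],
\]
where the second equality uses injectivity and continuous differentiability of $\psi$ and the third uses $|\det J_\psi|=1$. The requirement that $\psi(U)$ itself be Borel is automatic since $\psi(U)$ is compact. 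With all hypotheses verified, Theorem 1 yields $\psi\in\overline{\Psi}_{L^p(U)}$.

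The only step that requires any care, and which I do not expect to be a genuine obstacle, is ensuring that the change of variables formula applies despite $U$ being compact rather than open. One resolves this either by interpreting continuous differentiability on $U$ in the usual sense of admitting a $C^1$ extension to an open neighborhood of $U$, or by applying the formula on $\mathrm{int}(U)$ and noting that the boundary contribution to the Lebesgue measure is negligible for Borel subsets of the image. Thus the corollary is essentially a one-line deduction from Theorem 1 once the transformation formula is in hand; the real mathematical content of the result lies in Theorem 1 itself.
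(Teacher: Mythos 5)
Your proposal is correct and matches the paper's own (one-sentence) justification: Corollary~\ref{cor:1} is deduced from Theorem~\ref{thm:main} by noting that continuity on compact $U$ gives boundedness and the transformation formula for integrals with $|\det J_\psi|=1$ gives measure-preservation. Your extra care about the change of variables on a compact rather than open domain is reasonable but not a point the paper dwells on.
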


Finally, we would like to point out that different choices of $s$ in control family lead to same approximation results, thus we use symbols $\Psi$ without emphasizing $s$ (see Sec \ref{sec:properties} for detailed proof). As aforementioned, practical applications including NICE and RevNets could have reshuffling operations and different dimension-splitting mechanisms for each layer. 
If the used hypothesis space contains $\mathcal{M}_{up}$ and $\mathcal{M}_{low}$ for an integer $s$, then it inherits the approximation capabilities.

\section{Discussions}\label{sec:Discussions}
In this section, we will further investigate measure-preserving networks numerically and discuss the potential applications of our results.

\subsection{Learning  measure-preserving flow map}
Measure-preserving of divergence-free dynamical systems is a classical case of geometric structure and is more general than the symplecticity-preserving of Hamiltonian systems. Motivated by the satisfactory works on learning Hamiltonian systems \citep{chen2021data,greydanus2019hamiltonian, jin2020sympnets}, it is also interesting to learn divergence-free dynamics via measure-preserving models. As a by-product, we obtain Lemma \ref{lem:divergence-free} that measure-preserving neural networks are able to approximate arbitrary divergence-free dynamical system. (see Sec \ref{sec:Approximation results for flow maps}).

\begin{figure}[htbp]
    \centering
    \includegraphics[width=0.48\textwidth]{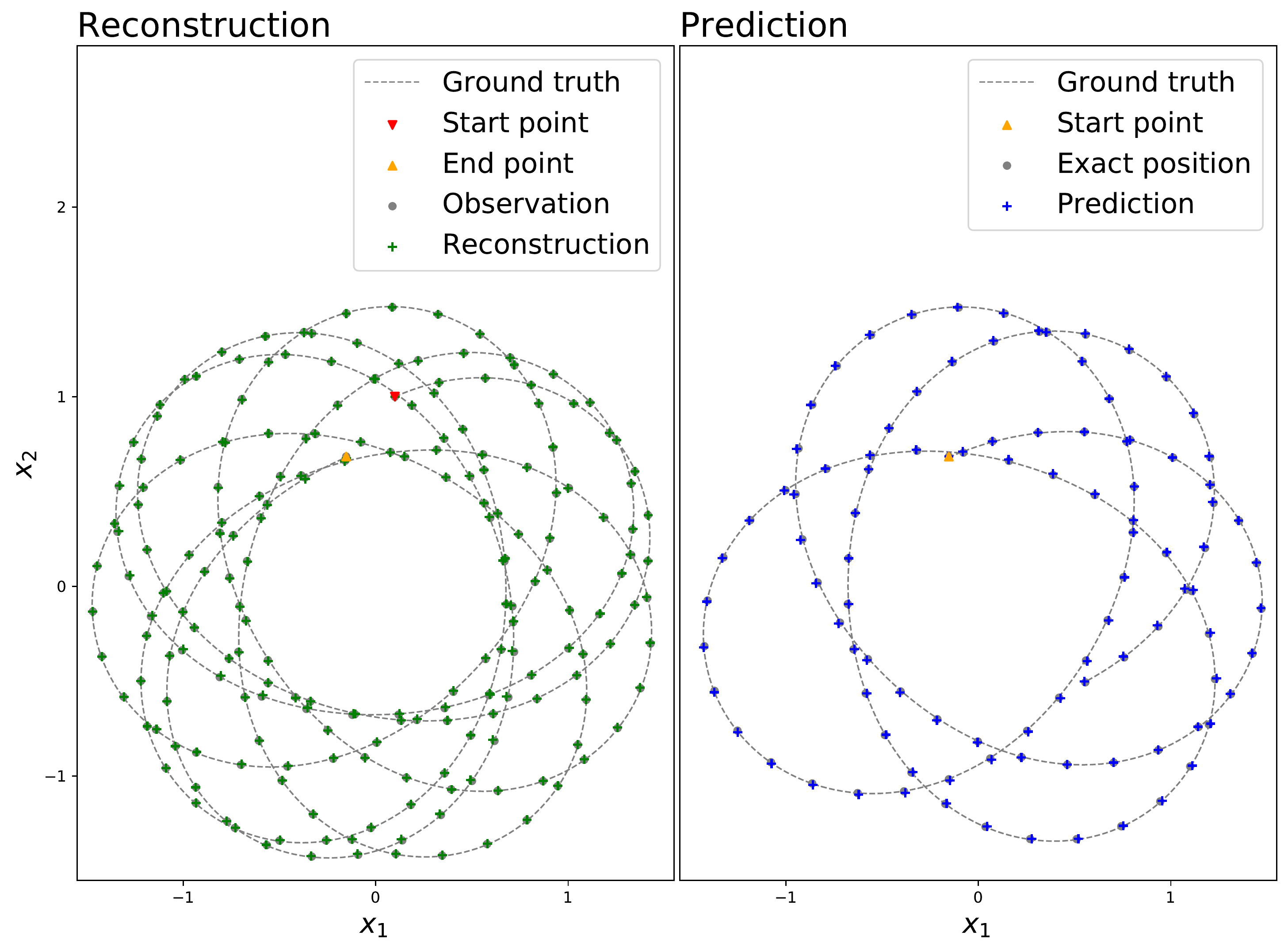}
    \caption{Learning  measure-preserving flow map using measure-preserving neural networks.}
    \label{fig:lf}
\end{figure}

Figure \ref{fig:lf} demonstrates the ability of measure-preserving networks to fit and extrapolate measure-preserving map numerically. Here, the training data $\{(x_n,x_{n+1})\}_{n=0}^{199}$ is obtained by sampling states on a single trajectory of a 4-dimensional divergence-free dynamical system. And we aim to approximate the flow map $\psi$ that maps $x_n$ to $x_{n+1}$ using measure-preserving network $\psi_{net}$. After training, we reconstruct the trajectory and perform predictions for $100$ steps starting at $x_{200}$. All trajectories are projected onto the first two dimensions. More experimental details are shown in \ref{app:Experimental details}. It is observed that the measure-preserving model successfully reconstructs and predicts the evolution of the measure-preserving flow.

\subsection{Application scopes of the approximation theory}
The expected error of neural networks can be divided into three main types: approximation, optimization, and generalization \citep{bottou2007tradeoff,bottou2010large,jin2020quantifying}. See Figure \ref{fig:error_types} for the illustration.
\begin{figure}[htbp]
    \centering
    \includegraphics[width=0.48\textwidth]{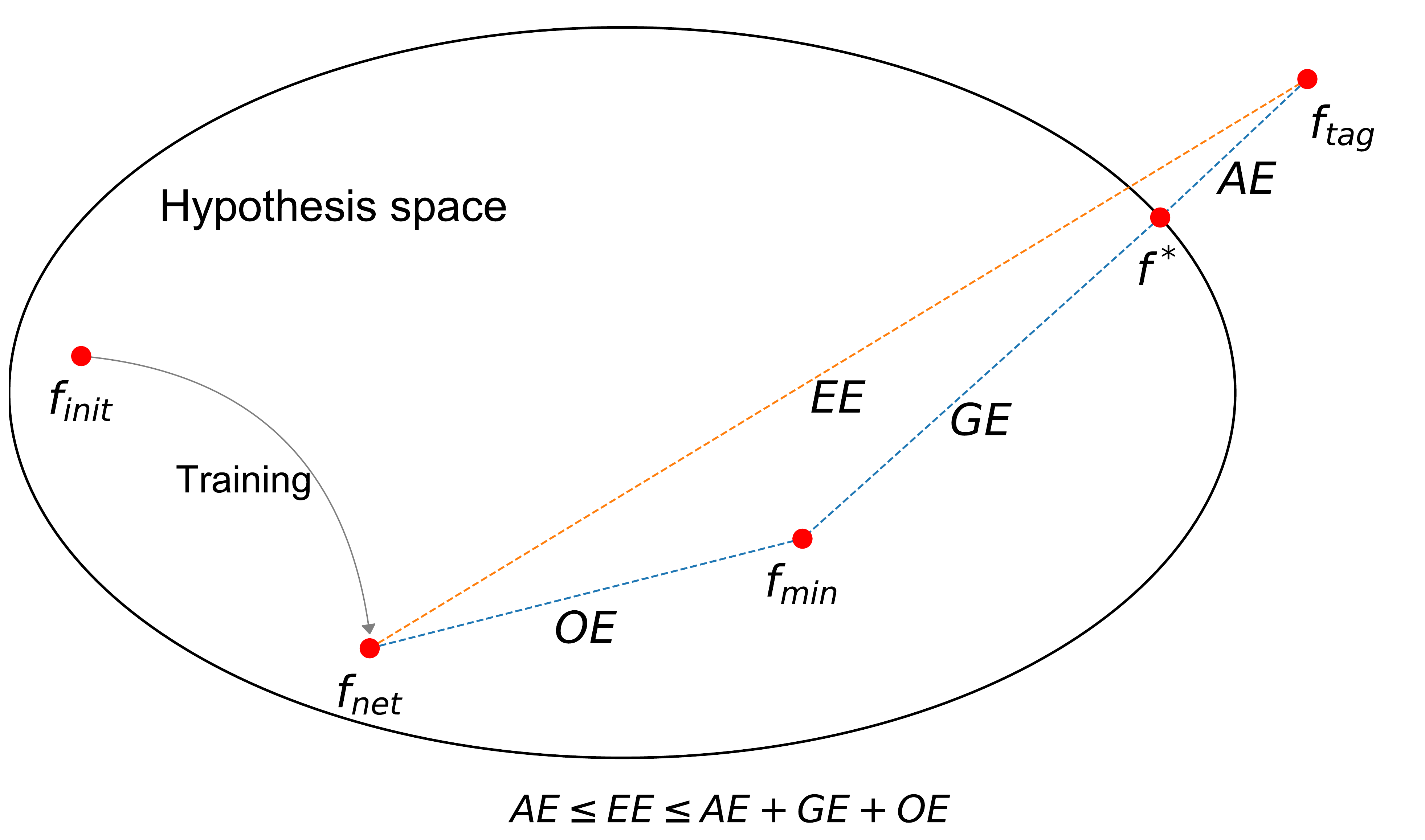}
    \caption{Illustration of optimization error (OE), generalization error (GE), approximation error (AE) and total expected error (EE). Here, $f_{net}$ is the network returned by the training algorithm starting at initial $f_{init}$. $f_{min}$ is the neural network which minimize empirical loss. $f_{tag}$ is the target ground-truth function, and $f^*$ is the network closest to $f_{tag}$ in the hypothesis space.}
    \label{fig:error_types}
\end{figure}

One of the key target in deep learning is to develop algorithms to increase accuracy, while the premise of this purpose is a good upper bound of approximation error. In addition, the approximation error is a crucial part of expected error. For structured deep neural networks, however, the approximation is different from the well-known universal approximation theory \citep{cybenko1989approximation,hornik1990universal} for fully connected networks obtained about 30 years ago and thus is attaching increasing attention (see Sec.\ref{sec:Induction} 3rd paragraph). Here, two application scenarios are important to discuss.

The first is that the target function is speculated to have a specific structure (e.g., CNN for image processing \citep{krizhevsky2012imagenet}, {measure-preserving modules in Poisson networks \citep{jin2020learning}}), or there exists prior knowledge exactly (e.g., HNN for discovery Hamiltonian systems \citep{greydanus2019hamiltonian}, DeepONet for learning nonlinear operators \citep{lu2021learning}, measure-preserving networks for identifying divergence-free dynamics). The approximation theory in this paper indicates the approximation error can be made sufficiently small, which theoretically guarantees the feasibility of measure-preserving network 
modeling measure-preserving map and provides a key ingredient to the error analysis of learning algorithms using measure-preserving models.

The second is that the target function does not involve  structures, but the employed network is designed for certain objectives, such as RevNets for avoiding storing intermediate activation \citep{gomez2017the}, generating models including NICE \citep{dinh2015nice} for computing inverse and Jacobian determinants efficiently, and measure-preserving networks for obtaining exact bijection of lossless compression \citep{zhang2021ivpf}. This compromise of expressivity has a significant impact on performance \citep{dinh2017density,chen2018neural,chen2019residual, huang2018neural,kingma2018glow}. And the approximation results mathematically characterize the limitation of the measure-preserving networks studied in this paper. In addition, our theory indicates that the approximation error mainly depends on the distance between the target function and measure-preserving function space. It would be an interesting future work to quantify this distance although it is not related to neural network theory. One possible approach is polar factorization \citep{brenier1991polar}.

\section{Proofs}\label{sec:Proofs}
Throughout this section we assume that $\Psi$ is defined as in (\ref{eq:mpnn}) with control families $\mathcal{NN}^{d}$ ($d=D-s+1,s-1$) satisfied Assumption \ref{ass:control_family}.

\subsection{Properties of measure-preserving neural networks}\label{sec:properties}
Consider the following auxiliary measure-preserving modules of the form
\begin{equation*}
R^{i}_{K,a,b}(x)= \begin{pmatrix}x[\ :i]\\ x[i] + \hat{\sigma}_{K,a,b}(\overline{x[i]})\\ x[i+1:\ ] \end{pmatrix}
 \end{equation*}
with $1 \leq i \leq D$. Here, $\hat{\sigma}_{K,a,b}$ specifies a fully connected neural network with one hidden layer, i.e.,
\begin{equation*}
\hat{\sigma}_{K,a,b}(\overline{x[i]}) =a \sigma(K\overline{x[i]}+b),\ K \in \R^{W \times (D-1)}, b\in \R^{W \times 1}, a\in \R^{1 \times W},
\end{equation*}
where $\sigma: \R \rightarrow \R$ is the smooth activation function sigmoid $Sig(z) = 1/(1 + e^{-z})$ with Lipschitz constant $L_{\sigma}$. By the universal approximation theorem, $\hat{\sigma}_{K,a,b}$ can approximate any smooth function. 

We denote the collection of $R^{i}_{K,a,b}$ as
\begin{equation*}
    \mathcal{R}^i = \{R^{i}_{K,a,b} |K \in \R^{W \times (D-1)}, b\in \R^{W \times 1}, a\in \R^{1 \times W}\}.
\end{equation*}
Lemma \ref{lem:r1} states that the auxiliary measure-preserving modules defined above can be approximated by measure-preserving neural networks. To prove this claim, we start with the following auxiliary lemma.
\begin{lemma}\label{lem:comerror}
Given a sequence of $\varphi^1, \cdots,\varphi^N$ which map from $\R^D$ to $\R^D$ and are Lipschitz on any compact set. If $\varphi^k \in \overline{\Psi}_{U}$ holds on any compact $U$, $1\leq k \leq N$, then $\varphi^N\circ \cdots \circ \varphi^1 \in \overline{\Psi}_U$ holds on any compact $U$.
\end{lemma}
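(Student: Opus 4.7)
The plan is to proceed by induction on $N$. The base case $N=1$ is immediate from the hypothesis. For the inductive step it suffices to handle $N=2$, since the composition $\varphi^{N-1}\circ\cdots\circ\varphi^1$ inherits the property of being Lipschitz on any compact set (composition of such maps is again such), and the induction hypothesis places it in $\overline{\Psi}_U$ for every compact $U$, so $\varphi^N\circ(\varphi^{N-1}\circ\cdots\circ\varphi^1)$ reduces to a two-term case.

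So the core is to show: if $\varphi^1,\varphi^2$ are Lipschitz on compacts and each lies in $\overline{\Psi}_U$ for every compact $U$, then $\varphi^2\circ\varphi^1\in\overline{\Psi}_U$. Fix a compact $U$ and $\varepsilon>0$. First I would form the compact image $K=\varphi^1(U)$ and enlarge it to a compact set $K'=\{y:\mathrm{dist}(y,K)\leq 1\}$. Using $\varphi^2\in\overline{\Psi}_{K'}$, choose $\psi^2\in\Psi$ with $\|\psi^2-\varphi^2\|_{K'}\leq \varepsilon/2$. Because every element of $\Psi$ is a finite composition of modules from $\mathcal{M}_{up}\cup\mathcal{M}_{low}$ whose control networks are Lipschitz on compacts (Assumption \ref{ass:control_family}), $\psi^2$ is Lipschitz on the compact set $K'$ with some constant $L$. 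Next I would approximate $\varphi^1$: by hypothesis there exists $\psi^1\in\Psi$ with $\|\psi^1-\varphi^1\|_U\leq \min\{1,\varepsilon/(2L)\}$. The first inequality guarantees $\psi^1(U)\subset K'$, so the following triangle inequality is valid:
\begin{equation*}
\|\psi^2\circ\psi^1-\varphi^2\circ\varphi^1\|_U \leq \|\psi^2\circ\psi^1-\psi^2\circ\varphi^1\|_U+\|\psi^2\circ\varphi^1-\varphi^2\circ\varphi^1\|_U\leq L\cdot\frac{\varepsilon}{2L}+\frac{\varepsilon}{2}=\varepsilon.
\end{equation*}
Since $\Psi$ is closed under composition (any element of $\Psi$ composed with another is again a finite composition of modules in $\mathcal{M}_{up}\cup\mathcal{M}_{low}$), $\psi^2\circ\psi^1\in\Psi$, which gives $\varphi^2\circ\varphi^1\in\overline{\Psi}_U$.

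The main obstacle I anticipate is the bookkeeping that ensures $\psi^1(U)$ actually lands in the set $K'$ on which $\psi^2$ approximates $\varphi^2$ and has a controlled Lipschitz constant; this is why the approximation tolerance for $\psi^1$ must be bounded both by $1$ (to keep the iterate in $K'$) and by $\varepsilon/(2L)$ (to absorb the Lipschitz factor of $\psi^2$). Everything else is essentially a standard $\varepsilon/2$ split plus the observation, noted already in the excerpt for $\Psi$, that finite compositions of networks in $\Psi$ remain in $\Psi$.
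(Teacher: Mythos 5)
Your proof is correct and follows essentially the same route as the paper: induction on $N$, enlargement of the compact image to a $1$-neighborhood so the approximate iterate stays where the outer approximation is valid, and a triangle inequality absorbing a Lipschitz factor. The only (harmless) difference is that you split the error so as to use the Lipschitz constant of the approximating network $\psi^2$ on $K'$ (legitimate by item 1 of Assumption \ref{ass:control_family}, and non-circular since you fix $\psi^2$ before choosing the tolerance for $\psi^1$), whereas the paper uses the Lipschitz constant of the target map $\varphi^{n+1}$ on $E(V)$ and obtains a bound of the form $(L+1)\varepsilon$.
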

\begin{proof}
We prove this lemma by induction. To begin with, the case $N=1$ is obvious. Suppose that this lemma holds when $N=n$. For the case $N=n+1$, given compact $U\in\R^D$, define 
\begin{equation*}
V=\bigcup_{k=1}^{n+1}\varphi^k\circ \cdots \circ \varphi^1(U)\cup U    
\end{equation*}
and 
\begin{equation*}
E(V) = \{x \in \R^D\ |\ \exists x'\in V\ \text{s.t.} \norm{x-x'}_{\infty}\leq 1\}, 
\end{equation*}
where $V$ and $E(V)$ are both compact. According to the induction hypothesis, we know that for any $0<\varepsilon <1$ there exists $\phi\in\Psi$ such that
\begin{equation*}
\norm{\varphi^n\circ \cdots \circ \varphi^1-\phi}_{U} \leq \varepsilon.
\end{equation*}
This inequality together with the condition $\varphi^n\circ \cdots \circ \varphi^1(U) \subset V$ yields that $\phi(U)\subset E(V)$. Since $\varphi^{n+1} \in  \overline{\Psi}_{E(V)}$ we can choose $\phi' \in \Psi$ such that
\begin{equation*}
\norm{\varphi^{n+1} - \phi'}_{E(V)} \leq \varepsilon.
\end{equation*}
By the triangle inequality we have
\begin{equation*}
\begin{aligned}
&\norm{\varphi^{n+1}\circ \cdots \circ \varphi^1-\phi'\circ \phi}_{U} \\
\leq& \norm{\varphi^{n+1}\circ \cdots \circ \varphi^1-\varphi^{n+1}\circ\phi}_{U} +\norm{\varphi^{n+1}\circ\phi-\phi'\circ \phi}_{U}\\
\leq& L\norm{\varphi^n\circ \cdots \circ \varphi^1-\phi}_{U} + \norm{\varphi^{n+1} - \phi'}_{E(V)}\\
\leq& (L+1)\varepsilon,
\end{aligned}
\end{equation*}
where $L$ is the Lipschitz constant of $\varphi^{n+1}$ on $E(V)$. Note that $\phi'\circ \phi \in \Psi$, hence
\begin{equation*}
\varphi^{n+1}\circ \cdots \circ \varphi^1 \in \overline{\Psi}_U,
\end{equation*}
which completes the induction.
\end{proof}

\begin{lemma}\label{lem:r1}
$\mathcal{R}^i\subset\overline{\Psi}_U$ for any compact set $U \subset \R^D$, $1 \leq i \leq D$.
\end{lemma}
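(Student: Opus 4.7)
The plan is to reduce the statement to single-neuron shears and then exhibit each one as an explicit short composition of $\mathcal{M}_{up}$ and $\mathcal{M}_{low}$ modules, which the universal approximation property of Assumption~\ref{ass:control_family}(2) and Lemma~\ref{lem:comerror} then glue into an element of $\overline{\Psi}_U$. First, expanding $\hat{\sigma}_{K,a,b}(\overline{x[i]}) = \sum_{w=1}^W a_w \sigma(K_w \overline{x[i]} + b_w)$, I would write $R^{i}_{K,a,b}$ as the composition of $W$ single-neuron shears (each of the form $x[i]\mapsto x[i] + a_w \sigma(K_w \overline{x[i]} + b_w)$, with all other coordinates fixed); this composition is exact because each factor touches only $x[i]$ while its argument does not involve $x[i]$. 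Each factor is Lipschitz on compact sets, so by Lemma~\ref{lem:comerror} it suffices to place a single-neuron $R^{i}$ in $\overline{\Psi}_U$ for every compact $U$.

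For such a single-neuron shear $x[i] \mapsto x[i] + a\sigma(\ell)$ with $\ell = \sum_{j\neq i} k_j x[j] + b$, I split $\ell = \ell_u + \ell_v + b$ along the partition $u=x[\ :s]$, $v=x[s:\ ]$ and handle the case $i<s$ (the case $i\geq s$ being symmetric under the swap $\mathcal{M}_{up}\leftrightarrow \mathcal{M}_{low}$). In Case~A, where some coefficient $k_{j_0}$ with $j_0\geq s$ is nonzero, a three-step composition suffices with $x[j_0]$ as a scratch coordinate: an $\mathcal{M}_{low}$ step sends $x[j_0]\mapsto x[j_0] + (\ell_u(u)+b)/k_{j_0}$ (no $x[i]$-dependence since $i\notin\ell_u$); an $\mathcal{M}_{up}$ step then sends $x[i]\mapsto x[i] + a\sigma(\sum_{j\geq s} k_j x[j])$ whose argument, evaluated on the already-shifted $x[j_0]$, equals exactly $\ell$; a final $\mathcal{M}_{low}$ step undoes the first, valid because the middle step did not change $x[j]$ for $j<s,\ j\neq i$. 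In Case~B, where $\ell_v\equiv 0$ so $\ell$ depends only on $x[j]$ for $j<s,\ j\neq i$, no single linear shift of a $v$-coordinate can absorb the $u$-dependence of $a\sigma(\ell)$, and I would instead use a four-step commutator-style composition with $x[s]$ as scratch: (i) $\mathcal{M}_{low}$ sends $x[s]\mapsto x[s] + a\sigma(\ell)$; (ii) $\mathcal{M}_{up}$ sends $x[i]\mapsto x[i] + x[s]$, so $x[i]$ picks up the original $x[s]$ plus $a\sigma(\ell)$; (iii) $\mathcal{M}_{low}$ sends $x[s]\mapsto x[s] - a\sigma(\ell)$, still valid because $\ell$ is $x[i]$-independent, restoring $x[s]$; (iv) $\mathcal{M}_{up}$ sends $x[i]\mapsto x[i] - x[s]$ with $x[s]$ now at its original value, leaving a net shift of $a\sigma(\ell)$ on $x[i]$ and all other coordinates unchanged.

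In either case every factor is an $\mathcal{M}_{up}$ or $\mathcal{M}_{low}$ with a specific smooth control function (affine or of the form $a\sigma$ applied to an affine map). By Assumption~\ref{ass:control_family}(2), each such smooth control function can be approximated uniformly on compact sets by an element of $\mathcal{NN}^d$, which exhibits the factor as a limit of elements of $\mathcal{M}_{up}\cup\mathcal{M}_{low}\subset\Psi$; each factor is Lipschitz on compact sets, so Lemma~\ref{lem:comerror} promotes these factorwise approximations into a uniform approximation of the entire composition, placing the single-neuron $R^{i}$ in $\overline{\Psi}_U$. A final application of the lemma chains the $W$ single-neuron factors together to give $\mathcal{R}^{i}\subset\overline{\Psi}_U$ as required. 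The most delicate point is Case~B: when the sigmoid's argument has no component on the ``other side'' of the $s$-split, it cannot be encoded into a scratch coordinate by a single linear shift and then read off, because the scratch's original value becomes inaccessible after the shift; the commutator-style four-step circumvents this, and its correctness hinges on $\ell$ being $x[i]$-independent so that the interleaved nonlinear middle step cannot interfere with the exact cancellation between the two ``scratch'' shifts on $x[s]$.
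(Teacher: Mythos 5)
Your proposal is correct, and it shares the paper's skeleton --- reduce $R^i_{K,a,b}$ to single-neuron shears, realize each shear as a short composition of idealized $\mathcal{M}_{up}$/$\mathcal{M}_{low}$ modules with smooth controls, then invoke Assumption~\ref{ass:control_family}(2) and Lemma~\ref{lem:comerror} to glue everything into $\overline{\Psi}_U$ --- but it diverges at the crux, namely how a single neuron $x[i]\mapsto x[i]+a\sigma(\ell)$ is realized when the argument $\ell$ must be routed across the $s$-split. The paper handles all cases uniformly with one three-step sandwich: it always uses $x[s]$ as the scratch coordinate, perturbs its coefficient to $K[w][s-1]+\delta\neq 0$ so that the pre-shift is well defined, and accepts that the resulting composition reproduces the shear only up to the extra term $\delta x[s]$ inside $\sigma$; this error is $O(\delta)$ uniformly on the compact set $V$ because $\sigma$ is Lipschitz, and is absorbed into the $\varepsilon$-budget. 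You instead insist on exact factorizations and pay for it with a case analysis: your Case~A is essentially the paper's sandwich with $\delta=0$ (available precisely when some cross-block coefficient is already nonzero), and your Case~B handles the degenerate situation --- which the paper's perturbation sidesteps entirely --- with a four-step commutator whose correctness rests, as you note, on $\ell$ being $x[i]$-independent so the two scratch shifts on $x[s]$ cancel exactly around the nonlinear middle step. Both routes are sound and end with the same gluing argument; the perturbation trick is more economical (three factors, no cases), while your commutator makes the degenerate case explicit and avoids having to estimate any $\delta$-dependent error.
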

\begin{proof}
Without loss of generality, we assume $i=1$ and $u = R^1_{K,a,b}$. Taking
\begin{equation*}
\phi^{w}(x)=\begin{pmatrix}x[1] + a[w]\sigma(K[w]x[2:\ ]+b[w])\\ x[2:\ ] \end{pmatrix}
\end{equation*}
for $w = 1,\cdots,W$ yields
\begin{equation*}
u=\phi^{W}\circ \cdots \circ \phi^{1}.
\end{equation*}
It is easy to verify that $\phi^w$ is Lipschitz on any compact set . In order to apply Lemma \ref{lem:comerror}, it suffices to show $\phi^w \in \overline{\Psi}_{V}$ for $w = 1,\cdots,W$ and any compact $V$, we will do it by construction. 

Given any $\epsilon>0$, for $\delta>0$ satisfying $K[w][s-1]+\delta \neq 0$, define
\begin{equation*}
\begin{aligned}
\phi^{w1}(x) &= \begin{pmatrix}x[\ :s]\\ x[s] +(K[w][s-1]+\delta)^{-1}K[w][ \ :s-1]x[2:s] \\ x[s+1:\ ] \end{pmatrix},\\
\phi^{w2}(x) &= \begin{pmatrix} \begin{aligned}x[1] + a[w]\sigma\Big(&(K[w][s-1]+\delta)x[s] \\&+K[w][s:\ ]x[s+1:\ ]+b[w]\Big)\end{aligned}\\ x[2:\ ] \end{pmatrix},\\
\phi^{w3}(x) &= \begin{pmatrix}x[\ :s]\\ x[s] -(K[w][s-1]+\delta)^{-1}K[w][ \ :s-1]x[2:s] \\ x[s+1:\ ] \end{pmatrix}.\\
\end{aligned}
\end{equation*}
We could readily check that
\begin{equation*}
\phi^{w3}\circ\phi^{w2} \circ \phi^{w1} = \begin{pmatrix}x[1] + a[w]\sigma(K[w]x[2:\ ]+b[w]+ \delta x[s])\\ x[2:\ ] \end{pmatrix}.
\end{equation*}
Since
\begin{equation*}
    \norm{\phi^{w3}\circ\phi^{w2} \circ \phi^{w1}-\phi^w}_V\leq C \delta
\end{equation*}
holds for a constant $C$, we can choose a small $\delta$ such that $C\delta<\frac{1}{2}\epsilon$. Furthermore, we have $\phi^{wi}\in \overline{(\mathcal{M}_{up}\cup\mathcal{M}_{low})}_{U'}\subset\overline{\Psi}_{U'}$ for any compact $U'$ according to the second item of Assumpion \ref{ass:control_family}. Applying Lemma \ref{lem:comerror} again and we have $\phi^{w3}\circ\phi^{w2} \circ \phi^{w1}\in\overline{\Psi}_V$, thus there exists $\psi\in\Psi$ such that
\begin{equation*}
    \norm{\psi-\phi^{w3}\circ\phi^{w2} \circ \phi^{w1}}_V\leq \frac{1}{2}\epsilon.
\end{equation*}
Therefore
\begin{equation*}
\begin{aligned}
\norm{\psi-\phi^w}_V&\leq\norm{\psi-\phi^{w3}\circ\phi^{w2} \circ \phi^{w1}}_V+\norm{\phi^{w3}\circ\phi^{w2} \circ \phi^{w1}-\phi^w}_V\\
&\leq\frac{1}{2}\epsilon+\frac{1}{2}\epsilon=\epsilon,
\end{aligned}
\end{equation*}
hence $\phi^w\in\overline{\Psi}_V$.
\end{proof}
The auxiliary modules are also measure-preserving but using special dimension-splitting mechanisms. Clearly, a element in $\mathcal{M}_{up}$ can be written as composition of $s-1$ maps like
\begin{equation*}
R^{i}_{f_{net}}(x)= \begin{pmatrix}x[\ :i]\\ x[i] + f_{net}(\overline{x[i]})\\ x[i+1:\ ] \end{pmatrix}.
 \end{equation*}
This fact together with Lemma \ref{lem:r1} concludes that different choices of $s$ in control family lead to same approximation results theoretically, thus we use  symbol $\Psi$ without emphasizing $s$.

In addition, we show that translation invariance of $\Psi$. This property will be used in Subsection \ref{sec:Proof of Theorem}.
\begin{property}\label{property:translation invariance}
Given $a \in \R^D$. If $\psi_{net} \in \Psi$, then $\psi_{net}+ a\in \overline{\Psi}_U$ for any compact $U$.
\end{property}
\begin{proof}
For any compact $U$, let
\begin{equation*}
\begin{aligned}
&m^{1}: &&\hat{x}[\ :s] = x[\ :s] +a[\ :s],\ \hat{x}[s:\ ] = x[s:\ ];\\
&m^{2}: &&\hat{x}[\ :s] = x[\ :s],\ \hat{x}[s:\ ] = x[s:\ ] + a[s:\ ].\\
\end{aligned}
\end{equation*}
We have $m^{i}\in \overline{(\mathcal{M}_{up}\cup\mathcal{M}_{low})}_{U'}\subset\overline{\Psi}_{U'}, i=1,2$ for any compact $U'$ according to the second item of Assumpion \ref{ass:control_family}. By Lemma \ref{lem:comerror} we know
\begin{equation*}
    m^2 \circ m^1 \circ \psi_{net} \in \overline{\Psi}_U,
\end{equation*}
which concludes the property.
\end{proof}

\subsection{Approximation results for flow maps}\label{sec:Approximation results for flow maps}
Recently, the dynamical systems approach led to much progress in the theoretical underpinnings of deep learning \citep{e2017proposal,e2019mean, li2017maximum}. In particular, \cite{li2020deep} build approximation theory for continuous-time deep residual neural  networks. These developments inspire us to apply differential equation techniques to complete the proof. The results of this work also serves the effectiveness of the dynamic system approach for understanding deep learning. Consider a differential equation
\begin{equation}\label{eq:ODE}
\frac{d}{dt}y(t) = f(t, y(t)),\quad y(\tau)=x, \tau\geq 0,
\end{equation}
where $y(t) \in \R^D$, $f: [0,+\infty)\times \R^D \rightarrow \R^D$ is smooth. For a given time step $T\geq 0$, $y(\tau+T)$ could be regarded as a function of its initial condition $x$. We denote $\varphi_{\tau,T,f}(x) := y(\tau+T)$, which is known as the time-$T$ flow map of the dynamical system (\ref{eq:ODE}). We also write the collection of such flow maps as
\begin{equation*}
    \mathcal{F}(U) = \left\{\varphi_{\tau, T, f}: U \rightarrow \R^D \ \big|\ \tau,T\geq0,\ f \in C^{\infty}([0,+\infty)\times \R^D) \right\}.
\end{equation*}

Following \citep{hairer1993solving,hairer2006geometric}, we briefly recall some essential supporting results of numerical integrators here.
\begin{definition}
Given system (\ref{eq:ODE}), an integrator $\Phi_{\tau,h,f}$ with time step $h$ has order $p$, if for compact $U \subset \R^D$, and any $\tau'$ in a compact time interval, there exists constant $C$ such that for sufficiently small step $h>0$,
\begin{equation*}
\norm{\Phi_{\tau',h,f}-\varphi_{\tau',h,f}}_{U}\leq C h^{p+1}.
\end{equation*}
\end{definition}
The order of integrator is usually pointwise defined in the literature. Here $U$ is compact and thus the above definition accords with the literature. The simplest numerical integrator is the explicit Euler method,
\begin{equation*}
\Phi_{\tau,h,f}^e(x) = x + h f(\tau,x).
\end{equation*}
Another scheme will be used in this paper is a splitting method. For system (\ref{eq:ODE}), if $f=\sum_{k=1}^K f^k$, the formula is given as
\begin{equation*}
\Phi_{\tau,h,f}^s(x) = \varphi_{\tau,h,f^K}\circ \cdots \circ \varphi_{\tau, h,f^1}(x).
\end{equation*}
The above numerical integrators are both of order $1$.

Next, we turn to the approximation aspects of measure-preserving flow maps. Measure-preserving is a certain geometric structure of continuous dynamical systems. As demonstrated in \cite[Section  \uppercase\expandafter{\romannumeral6}.6]{hairer2006geometric}, measure is preserved by the flow of differential equations with a divergence-free vector field.
\begin{proposition}\label{pro:divergence free}
The flow map of system (\ref{eq:ODE}) is measure-preserving if and only if
\begin{equation*}
\dive_y f= \sum_{d=1}^D \frac{\partial f_d}{\partial y_d}=0,
\end{equation*}
where $f=(f_1, \cdots, f_D)^\top$, $y=(y_1, \cdots, y_D)^\top$.
\end{proposition}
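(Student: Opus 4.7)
The plan is to prove the result via the classical Liouville formula, which connects the determinant of the flow's spatial Jacobian to the divergence of $f$. Fix $x\in\R^D$ and $\tau\geq 0$, and let $M(T)$ denote the matrix whose $(i,j)$ entry is $\partial \varphi_{\tau,T,f}(x)[i]/\partial x[j]$. Differentiating (\ref{eq:ODE}) in the initial datum $x$ yields the variational equation
\[
\frac{d}{dT}M(T) = A(T)\,M(T),\qquad M(0)=I,
\]
where $A(T)[i][j] = (\partial f_i/\partial y_j)(\tau+T,\varphi_{\tau,T,f}(x))$. Applying Jacobi's identity for the derivative of a determinant gives $\frac{d}{dT}\det M(T) = \mathrm{tr}(A(T))\det M(T) = (\dive_y f)(\tau+T,\varphi_{\tau,T,f}(x))\det M(T)$, which integrates to
\[
\det M(T) = \exp\Big(\int_0^T (\dive_y f)(\tau+s,\varphi_{\tau,s,f}(x))\,ds\Big).
\]
In particular, $\det M(T,x)$ is strictly positive and continuous in $x$ by smoothness of $f$.

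For the $(\Leftarrow)$ direction, if $\dive_y f\equiv 0$ the Liouville formula collapses to $\det M(T)\equiv 1$, so the change-of-variables formula yields $\mathcal{H}[\varphi_{\tau,T,f}(B)] = \int_B \det M(T,x)\,dx = \mathcal{H}[B]$ for every Borel set $B$, i.e.\ measure preservation. For the $(\Rightarrow)$ direction, suppose every flow map $\varphi_{\tau,T,f}$ is measure-preserving. The change-of-variables formula gives $\int_B \det M(T,x)\,dx = \mathcal{H}[B]$ for every Borel $B$; continuity of the integrand upgrades this integral identity to the pointwise statement $\det M(T,x)=1$ for all $T\geq 0$ and $x$. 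Differentiating this identity at $T=0$ and invoking Jacobi's formula gives $(\dive_y f)(\tau,x) = 0$, and since $\tau$ and $x$ were arbitrary, $\dive_y f\equiv 0$.

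The only slightly subtle step is the pointwise upgrade in the necessity direction, i.e.\ passing from $\int_B \det M(T,x)\,dx = \mathcal{H}[B]$ to $\det M(T,x)\equiv 1$. Continuity of $\det M(T,\cdot)$ handles this: any pointwise deviation $\det M(T,x_0)\neq 1$ would persist on a small ball around $x_0$ on which the deviation has constant sign, contradicting the integral identity when $B$ is taken to be that ball. The rest of the argument is a direct application of standard ODE machinery (the variational equation and Jacobi's identity), for which smoothness of $f$ is more than enough to justify all differentiations.
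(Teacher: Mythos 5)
Your proof is correct and follows exactly the classical Liouville/Abel--Jacobi--Ostrogradskii argument (variational equation plus Jacobi's determinant identity) that the paper relies on by citing \cite[Section VI.6]{hairer2006geometric} in lieu of a written proof. The pointwise upgrade in the necessity direction via continuity of $\det M(T,\cdot)$ is handled properly, so nothing is missing.
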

By Proposition \ref{pro:divergence free}, we denote the set of measure-preserving flow maps as
\begin{equation*}
    \mathcal{VF}(U) = \left\{\varphi_{\tau, T, f}\in \mathcal{F}(U)\ \big|\ \dive_y f=0\right\}.
\end{equation*}
Subsequently, we introduce two kinds of vector fields of measure-preserving flow maps.
\begin{definition}
For $f: [0,+\infty)\times \R^D \rightarrow \R^D$ and $1\leq d \leq D-1$, we say $f$ is $2$-Hamiltonian in the $d,d+1$-th variables if there exists a scalar function $H: [0,+\infty)\times\R^D \rightarrow \R$ such that
\begin{equation*}
f = (\underbrace{0,\cdots,0}_{d-1},-\frac{\partial H}{\partial y_{d+1}}, \frac{\partial H}{\partial y_d},\underbrace{0,\cdots,0}_{D-d-1})^\top.
\end{equation*}
\end{definition}

\begin{definition}
For $f: [0,+\infty)\times \R^D \rightarrow \R^D$ and $1\leq d \leq D-1$, we say $f$ is separable $2$-Hamiltonian in the $d,d+1$-th variables if there exist two scalar functions $g_1,g_2:  [0,+\infty)\times\R^{D-1} \rightarrow \R$  such that
\begin{equation*}
f = (\underbrace{0,\cdots,0}_{d-1},g_1(t,\overline{y[d]}), g_2(t,\overline{y[d+1]}),\underbrace{0,\cdots,0}_{D-d-1})^\top.
\end{equation*}
\end{definition}

Clearly, a separable $2$-Hamiltonian $f$ is $2$-Hamiltonian and both are divergence-free. Below we will establish the approximation results for flow maps with separable 2-Hamiltonian vector fields (Lemma \ref{lem:one-Hami}) and 2-Hamiltonian vector fields (Lemma \ref{lem:two-Hami}), and finally obtain the approximation theory of measure-preserving flow maps (Lemma \ref{lem:divergence-free}).

To this end, we present the composition approximation for flow map firstly, which will be used frequently.

\begin{lemma}\label{lem:diserror}
Given smooth $f: [0,+\infty)\times \R^D \rightarrow \R^D$ and $\varphi_{\tau, T,f} \in \mathcal{F}(U)$ with compact set $U \subset \R^D$.
If on any compact $U'$, there exists $\phi\in \Psi$ such that
\begin{equation*}
\norm{\varphi_{\tau',h,f} - \phi}_{U'} \leq Ch^2
\end{equation*}
holds for any $\tau' \in [\tau,\tau+T]$ and any sufficiently small step $h>0$,
then,
\begin{equation*}
\varphi_{\tau,T,f} \in \overline{\Psi}_U.
\end{equation*}
\end{lemma}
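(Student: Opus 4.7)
The plan is to exploit the classical numerical-analysis idea that the time-$T$ flow is a composition of many short-time flows, each of which the hypothesis lets us approximate by an element of $\Psi$. Concretely, for a large integer $N$ and step size $h=T/N$ I would use the semigroup property of the flow to write
$$\varphi_{\tau,T,f} = \varphi_{\tau+(N-1)h,\,h,\,f}\circ\cdots\circ\varphi_{\tau+h,\,h,\,f}\circ\varphi_{\tau,\,h,\,f}.$$
Because any finite composition of elements of $\Psi$ is again in $\Psi$ by the definition (\ref{eq:mpnn}), if I can pick $\phi_k\in\Psi$ close to each factor $\varphi_{\tau+(k-1)h,h,f}$, then the single network $\phi_N\circ\cdots\circ\phi_1\in\Psi$ will be close to $\varphi_{\tau,T,f}$.

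First I would fix a compact $V\subset\R^D$ containing the full trajectory bundle $\bigcup_{t\in[0,T]}\varphi_{\tau,t,f}(U)$ together with a fixed-radius neighborhood to absorb the drift of the approximate iterates off the true orbit; such a $V$ exists because $U$ and $[\tau,\tau+T]$ are compact and $f$ is smooth. On $V$ the one-step flow $\varphi_{\tau',h,f}$ is Lipschitz with constant $L_h\leq 1+Mh$ for some $M$ depending only on $f$ and $V$, by standard ODE estimates (Gr\"onwall applied to the variational equation); equivalently $L_h^{N}\leq e^{MT}$. The hypothesis then supplies, for each $k=1,\dots,N$, some $\phi_k\in\Psi$ with
$$\norm{\varphi_{\tau+(k-1)h,\,h,\,f}-\phi_k}_V \leq Ch^2.$$

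Next I would propagate the one-step error through the composition by a telescoping argument in the same spirit as the proof of Lemma \ref{lem:comerror}. Writing $\Phi_k=\phi_k\circ\cdots\circ\phi_1$ and $\Xi_k=\varphi_{\tau+(k-1)h,h,f}\circ\cdots\circ\varphi_{\tau,h,f}$, the triangle inequality gives, provided all iterates remain in $V$,
$$\norm{\Xi_k-\Phi_k}_U \leq L_h\norm{\Xi_{k-1}-\Phi_{k-1}}_U + Ch^2,$$
and iterating yields $\norm{\Xi_N-\Phi_N}_U\leq Ch^2\sum_{j=0}^{N-1}L_h^{j}\leq CNh^2 e^{MT}=CTh\,e^{MT}$, which tends to $0$ as $h\to 0$. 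Since $\Phi_N\in\Psi$, this shows $\varphi_{\tau,T,f}\in\overline{\Psi}_U$.

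The main obstacle is the bookkeeping for the enlarged compact set: because each $\phi_k$ perturbs the iterate off the true trajectory, one must fix $V$ at the outset and then verify by induction that the approximate iterates $\Phi_k(U)$ stay inside $V$ for every $k\leq N$. This is unproblematic once $h$ is small enough that the cumulative drift bound $CTh\,e^{MT}$ is smaller than the neighborhood radius defining $V$, but it is the step where the two ingredients---Lipschitz control of the flow on $V$ and the $O(h^2)$ local approximation---must be matched up consistently, and where one relies essentially on the Lipschitz-on-compacts property of elements of $\mathcal{NN}^d$ guaranteed by Assumption \ref{ass:control_family}.
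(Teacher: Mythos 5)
Your proposal is correct and follows essentially the same route as the paper's proof: both split $\varphi_{\tau,T,f}$ into $N=T/h$ short-time flows, approximate each factor by an element of $\Psi$ on a fixed enlarged compact neighborhood $V$ of the trajectory bundle, and propagate the $O(h^2)$ local errors through the composition by a telescoping/Lipschitz (Lady Windermere's fan) argument, checking inductively that the approximate iterates remain in $V$. The only cosmetic difference is that you peel factors off the front and accumulate $L_h^j\le e^{MT}$, whereas the paper telescopes with the remaining exact flow in front and uses a single uniform Lipschitz constant for all the long-time flow maps; both yield the same $O(Nh^2)=O(h)$ global bound.
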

\begin{proof}
Define
\begin{equation*}
V = \{\varphi_{\tau',T', f}(x)\ |\ x\in U,\ \tau\leq\tau'\leq \tau'+T' \leq \tau+T\},
\end{equation*}
and for $i=1,2$,
\begin{equation*}
E^i(V) = \{x \in \R^D\ |\ \exists x'\in V\ \text{s.t.} \norm{x-x'}_{\infty}\leq i\},
\end{equation*}
where $V$ and $E^i(V)$ are compact since $f$ is smooth.
Let
\begin{equation*}
L =1+ \sup\limits_{\substack{\tau\leq\tau'\leq \tau'+T' \leq \tau+T\\x \in {E^2(V)}}} \norm{\frac{\partial \varphi_{\tau',T', f}(x)}{\partial x}}_{\infty}.
\end{equation*}
And for any $0<\varepsilon <1$, take
\begin{equation*}
    N > \frac{1}{\varepsilon}(LCT^2 + T\norm{f}_{[\tau,\tau+T]\times E^2(V)}), \quad h=\frac{T}{N}.
\end{equation*}
Then there exists a sequence of $\phi_{N-1}, \cdots, \phi_{0} \in \Psi$, such that, for $0 \leq k\leq N-1$,
\begin{equation*}
\norm{\varphi_{\tau+kh,h,f} - \phi_k}_{E^1(V)} \leq Ch^2.
\end{equation*}
To conclude the lemma, it suffices to show that
\begin{equation*}
\norm{\varphi_{\tau,nh,f}-\phi_{n-1}\circ \cdots \circ \phi_{0}}_{U} \leq n\cdot L\cdot C\cdot \frac{T^2}{N^2}
\end{equation*}
for any $1\leq n \leq N$.
We now prove this statement by induction on $1\leq n \leq N$. First, the case when $n=1$ is obvious. Suppose now
\begin{equation*}
\norm{\varphi_{\tau,kh,f}-\phi_{k-1}\circ \cdots \circ \phi_{0}}_{U} \leq k\cdot L\cdot C\cdot \frac{T^2}{N^2}
\end{equation*}
for $k\leq n-1$. This inductive
hypothesis implies $\phi_{k-1}\circ \cdots \circ \phi_{0}(U) \subset E^1(V)$ and thus
\begin{equation*}
\begin{aligned}
&\phi_{k}\circ \phi_{k-1}\circ \cdots \circ \phi_{0}(U) \subset E^2(V),\\
&\varphi_{\tau+kh, h,f}\circ \phi_{k-1}\circ \cdots \circ \phi_{0}(U) \subset E^2(V),\\
\end{aligned}
\end{equation*}
where we have used the fact that for any $x\in E^1(V)$,
\begin{equation*}
\begin{aligned}
\norm{\varphi_{\tau+kh, h,f}(x)  - x}_{\infty} &= \norm{\int_{\tau+kh}^{\tau+(k+1)h}f(t,x(t))dt}_{\infty}\\
&\leq h \norm{f}_{[\tau,\tau+T]\times E^2(V)}\leq \varepsilon < 1,
\end{aligned}
\end{equation*}
and
\begin{equation*}
\begin{aligned}
\norm{\phi_{k}(x)-x}_{\infty}&\leq \norm{\varphi_{\tau+kh, h,f}(x)  - x}_{\infty} + Ch^2 \\
&\leq h \norm{f}_{[\tau,\tau+T]\times E^2(V)}+ Ch^2\leq \varepsilon < 1.
\end{aligned}
\end{equation*}
Subsequently, denote $L_k = \sup_{x \in E^2(V)}\norm{\frac{\partial \varphi_{\tau+(k+1)h,(n-k-1)h,f}(x)}{\partial x}}_{\infty} \leq L$, we obtain
\begin{equation*}
\begin{aligned}
&\norm{\varphi_{\tau,nh,f}-\phi_{n-1}\circ \cdots \circ \phi_{0}}_{U} \\
\leq &\Big(\sum_{k=1}^{n-1}\|\varphi_{\tau+kh,(n-k)h,f}\circ \phi_{k-1}\circ \cdots \circ \phi_{0} \\ & \quad - \varphi_{\tau+(k+1)h,(n-k-1)h,f}\circ \phi_{k}\circ \cdots \circ \phi_{0}\|_{U}\Big) \\
&+\norm{\varphi_{\tau,nh,f} - \varphi_{\tau+h,(n-1)h,f} \circ \phi_{0}}_{U}\\
\leq &\sum_{k=1}^{n-1} L_k \norm{\varphi_{\tau+kh, h,f}\circ \phi_{k-1}\circ \cdots \circ \phi_{0} -  \phi_{k}\circ \cdots \circ \phi_{0}}_{U}\\
&+L_0 \norm{\varphi_{\tau,h,f}-\phi_{0}}_{U}\\
\leq &\sum_{k=1}^{n-1} L \norm{\varphi_{\tau+kh, h,f}-  \phi_{k}}_{E^1(V)} + LC \frac{T^2}{N^2}\\
\leq &n\cdot L\cdot C\cdot \frac{T^2}{N^2}.
\end{aligned}
\end{equation*}
Hence the induction holds and the proof is completed.
\end{proof}

\begin{lemma}\label{lem:one-Hami}
Given compact $U\subset \R^D$ and $\varphi_{\tau, T, f}\in \mathcal{F}(U)$. If the vector fields $f$ is separable $2$-Hamiltonian in the $d,d+1$-th variables with $1 \leq d \leq D-1$, then,
\begin{equation*}
\varphi_{\tau,T,f}\in \overline{\Psi}_{U}.
\end{equation*}
\end{lemma}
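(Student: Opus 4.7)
The plan is to exploit the separable structure of $f$ via an operator-splitting argument, so that the time-$h$ flow decomposes (up to $O(h^2)$ error) into two one-coordinate updates that each fall into the auxiliary module class $\mathcal{R}^i$ already shown to sit in $\overline{\Psi}$. Concretely, I will write $f = f^1 + f^2$ with
\begin{equation*}
f^1 = (0,\dots,0,g_1(t,\overline{y[d]}),0,\dots,0)^\top,\quad
f^2 = (0,\dots,0,g_2(t,\overline{y[d+1]}),0,\dots,0)^\top,
\end{equation*}
so that $f^1$ acts only on coordinate $d$ and $f^2$ only on coordinate $d+1$. The crucial point is the separable hypothesis: $g_1$ does not depend on $y_d$ and $g_2$ does not depend on $y_{d+1}$, so along the flow of $f^1$ the argument $\overline{y[d]}$ stays frozen, and similarly for $f^2$. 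Hence the time-$h$ flows admit the closed forms
\begin{equation*}
\varphi_{\tau,h,f^1}(x) = R^{d}_{G_1^{\tau,h}}(x),\qquad
\varphi_{\tau,h,f^2}(x) = R^{d+1}_{G_2^{\tau,h}}(x),
\end{equation*}
where $G_i^{\tau,h}(z) = \int_\tau^{\tau+h} g_i(s,z)\,ds$ is smooth in $z$.

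Next I would show that each of these single-coordinate updates lies in $\overline{\Psi}_{U'}$ for any compact $U'$. By the universal approximation theorem (Assumption \ref{ass:control_family}(2)) the smooth function $G_i^{\tau,h}$ can be uniformly approximated on any compact set by networks of the form $\hat{\sigma}_{K,a,b}$, so $R^{d}_{G_1^{\tau,h}}$ and $R^{d+1}_{G_2^{\tau,h}}$ lie in the uniform closure of $\mathcal{R}^d$ and $\mathcal{R}^{d+1}$, respectively. Lemma \ref{lem:r1} then gives both flows as elements of $\overline{\Psi}_{U'}$. Since $R^i_g$-type maps are Lipschitz on compacts, Lemma \ref{lem:comerror} yields
\begin{equation*}
\varphi_{\tau,h,f^2}\circ\varphi_{\tau,h,f^1} \in \overline{\Psi}_{U'}.
\end{equation*}

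The composition $\varphi_{\tau,h,f^2}\circ\varphi_{\tau,h,f^1}$ is exactly the splitting integrator $\Phi^s_{\tau,h,f}$ from Section \ref{sec:Approximation results for flow maps}, which is of order one, so
\begin{equation*}
\|\varphi_{\tau,h,f}-\varphi_{\tau,h,f^2}\circ\varphi_{\tau,h,f^1}\|_{U'} \le C h^2
\end{equation*}
for a constant $C$ depending only on $U'$, $\tau$, $T$. Combining this with the $\overline{\Psi}_{U'}$-approximation of the splitting map to any tolerance $\varepsilon \le h^2$ produces $\phi\in\Psi$ with $\|\varphi_{\tau,h,f}-\phi\|_{U'}\le (C+1)h^2$ for all small enough $h$, and uniformly in $\tau'\in[\tau,\tau+T]$ by compactness of the time interval.

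Finally, I would feed this bound into Lemma \ref{lem:diserror}, which converts a per-step $O(h^2)$ approximation into a global statement $\varphi_{\tau,T,f}\in\overline{\Psi}_U$. The main obstacle I anticipate is bookkeeping the uniformity of the $O(h^2)$ splitting constant as $\tau'$ varies over $[\tau,\tau+T]$, and making sure the Lipschitz-on-compacts hypotheses in Lemma \ref{lem:comerror} and the compact-set enlargement used in Lemma \ref{lem:diserror} are satisfied simultaneously; this is routine but requires picking the right enlarged compact set $E(V)$ before invoking the universal approximation step so that all constants depend only on that fixed enlargement rather than on $h$.
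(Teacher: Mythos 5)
Your proposal is correct and follows essentially the same route as the paper: split $f$ into the two single-coordinate fields, realize each sub-step as an $\mathcal{R}^d$/$\mathcal{R}^{d+1}$ shear handled by Lemma \ref{lem:r1}, compose via Lemma \ref{lem:comerror}, invoke the first-order accuracy of the splitting scheme for the $O(h^2)$ one-step bound, and conclude with Lemma \ref{lem:diserror}. The only cosmetic difference is that you use the exact sub-flows $p\mapsto p+\int_{\tau'}^{\tau'+h}g_1(s,\cdot)\,ds$ whereas the paper uses the frozen-time Euler updates $p\mapsto p+hg_1(\tau',\cdot)$; both are first-order and the argument is otherwise identical.
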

\begin{proof}
Without loss of generality, we assume $d=1$. The relation between $x$ and $\varphi_{\tau,T,f}(x)$ is characterized by the following equation,
\begin{equation}\label{eq:1sode}
\begin{aligned}
\frac{d}{dt}y(t)= f(t,y(t)), \quad y(\tau)=x,\ y(\tau+T) = \varphi_{\tau,T,f}(x).
\end{aligned}
\end{equation}
For $y = (y_1,y_2,y_3,\cdots, y_d)$, denote $p=y_1, q=y_2, \mu=(y_3,\cdots, y_d)^\top$. Since $f$ is separable $2$-Hamiltonian in the $1,2$-th variables, there exist two scalar functions $g_1,g_2:  [0,+\infty)\times\R^{D-1} \rightarrow \R$ such that equation (\ref{eq:1sode}) can be written as
\begin{equation}\label{eq:1shami}
\begin{aligned}
\frac{d}{dt}p(t) &= g_1(t,q,\mu), \\
\frac{d}{dt}q(t) &= g_2(t,p,\mu), \\
\frac{d}{dt}\mu(t) &=0.
\end{aligned}
\end{equation}
For any $\tau' \in [\tau,\tau+T]$ and any sufficiently small step $h>0$, define the following map
\begin{equation*}
\begin{aligned}
\phi_{\tau',h}^1(p,q,\mu) &= (p + hg_1(\tau',q,\mu), q, \mu^\top)^\top,\\
\phi_{\tau', h}^2(p,q,\mu) &= (p, q+hg_2(\tau',p,\mu),\mu^\top)^\top,\\
\phi_{\tau', h} &= \phi_{\tau', h}^2 \circ \phi_{\tau', h}^1.
\end{aligned}
\end{equation*}
Here, $\phi_{\tau', h}$ is the splitting integrator applied to system (\ref{eq:1shami}), which is an integrator of order one. Therefore, for any compact $U'$, there exists constant $C$ such that
\begin{equation*}
\norm{\varphi_{\tau',h,f} - \phi_{\tau',h}}_{U'}\leq Ch^2.
\end{equation*}

In addition, for any compact $V$, the universal approximation theorem of neural networks with one hidden layer and sigmoid activation together with Lemma \ref{lem:r1} implies
\begin{equation*}
\begin{aligned}
&\phi_{\tau',h}^1 \in \overline{\mathcal{R}^1}_{V} \subset \overline{\Psi}_{V},\\
&\phi_{\tau',h}^2 \in \overline{\mathcal{R}^2}_{V} \subset \overline{\Psi}_{V}.\\
\end{aligned}
\end{equation*}
By Lemma \ref{lem:comerror}, we obtain $\phi_{\tau',h} \in \overline{\Psi}_{U'}$ and thus there exists $v \in \Psi$ such that
\begin{equation*}
\norm{v - \phi_{\tau',h}}_{U'}\leq h^2.
\end{equation*}

Finally, we conclude that
\begin{equation*}
\norm{\varphi_{\tau',h,f} - v}_{U'}\leq (C+1)h^2,
\end{equation*}
and the lemma is completed by applying Lemma \ref{lem:diserror}.
\end{proof}

\begin{proposition}\label{lem:approximation hamiltonian}
Given any non-autonomous $H(t,p,q,\mu)$ with bounded parameter $\mu$, polynomial in $p, q\in \R$, and the Hamiltonian system
\begin{equation*}
\begin{aligned}
&\frac{d}{dt}p(t) = -\frac{\partial H}{\partial q}(t,p,q,\mu), \\
&\frac{d}{dt}q(t) =  \frac{\partial H}{\partial p}(t,p,q,\mu). \\
\end{aligned}
\end{equation*}
Denote $f_1 = (-\frac{\partial H}{\partial q}, \frac{\partial H}{\partial p})^\top$. Then on any
compact domain $U$ in the $(p,q,\mu)$-space and any compact interval of the values of $\tau$, there exists a scalar function $V(t,q,\mu)$ polynomial in $q$, such that, for any sufficiently small step $h>0$, the time-$2\pi$ flow map of the Hamiltonian system
\begin{equation*}
\begin{aligned}
&\frac{d}{dt}p(t) = -q-h\cdot \frac{\partial V}{\partial q}(t,q,\mu), \\
&\frac{d}{dt}q(t) =  p,\\
\end{aligned}
\end{equation*}
denoted as $\varphi_{0,2\pi,f_2}$ with $f_2=(-q-h \frac{\partial V}{\partial q}(t,q,\mu),p)^\top$, satisfies
\begin{equation*}
\sup_{(p,q,\mu) \in U}\norm{\varphi_{0,2\pi,f_2}(p,q) - \varphi_{\tau, h, f_1}(p,q)}_{\infty}\leq C h^2
\end{equation*}
with constant $C$.
\end{proposition}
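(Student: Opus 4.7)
The vector field $f_2$ describes a harmonic oscillator $\dot p=-q$, $\dot q=p$ perturbed by the small Hamiltonian $hV(t,q,\mu)$, whose unperturbed time-$2\pi$ flow is the identity. My plan is to treat this as a regular perturbation in $h$: pass to the rotating frame in which the unperturbed motion is trivial, apply first-order averaging to describe the $O(h)$ correction to the time-$2\pi$ map, and then \emph{choose} $V$ so that this correction matches the $O(h)$ term of
\begin{equation*}
\varphi_{\tau,h,f_1}(p,q)=(p,q)+h\bigl(-H_q,H_p\bigr)(\tau,p,q,\mu)+O(h^2).
\end{equation*}
Let $A$ be the $2\times 2$ matrix with $A_{12}=-1$, $A_{21}=1$ and zeros elsewhere, so that $e^{At}$ is the rotation by angle $t$ and $e^{2\pi A}=I$. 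Writing $(p(t),q(t))^\top=e^{At}(P(t),Q(t))^\top$ and using that $e^{At}$ is symplectic, a direct calculation shows that $(P,Q)$ satisfies Hamilton's equations with the time-dependent Hamiltonian $hK(t,P,Q,\mu):=hV(t,P\sin t+Q\cos t,\mu)$. Since $e^{2\pi A}=I$, one has $\varphi_{0,2\pi,f_2}(p_0,q_0)=(P(2\pi),Q(2\pi))$ with initial data $(p_0,q_0)$.

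Because $hK$ is of size $O(h)$, a Gronwall estimate on a compact $h$-independent enlargement of $U$ forces $(P(s),Q(s))=(p_0,q_0)+O(h)$ uniformly for $s\in[0,2\pi]$, and Taylor-expanding the integrand in the variation-of-parameters formula then yields
\begin{equation*}
(P(2\pi),Q(2\pi))=(p_0,q_0)+h\!\int_0^{2\pi}\!\bigl(-K_Q,K_P\bigr)(s,p_0,q_0,\mu)\,ds+O(h^2).
\end{equation*}
The leading-order displacement is precisely the Hamiltonian vector field of the averaged Hamiltonian $2\pi\bar K$, where $2\pi\bar K(p,q,\mu):=\int_0^{2\pi}V(s,p\sin s+q\cos s,\mu)\,ds$. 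Matching this with $h\bigl(-H_q,H_p\bigr)(\tau,p,q,\mu)$ reduces the proposition to the purely algebraic problem of finding $V(s,q,\mu)$ polynomial in $q$ such that $2\pi\bar K(p,q,\mu)=H(\tau,p,q,\mu)$ modulo additive functions of $\mu$ alone.

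To solve this inverse problem, I would decompose $H(\tau,\cdot,\cdot,\mu)=\sum_n H_n$ into its homogeneous parts $H_n=\sum_{k=0}^n h_{n,k}(\tau,\mu)\,p^k q^{n-k}$ and try the separated ansatz $V=\sum_n q^n w_n(s,\mu)$. Expanding $(p\sin s+q\cos s)^n$ by the binomial theorem and matching the coefficient of $p^k q^{n-k}$ turns the identity $2\pi\bar K=H$ into the finite linear system
\begin{equation*}
\binom{n}{k}\!\int_0^{2\pi}\!w_n(s,\mu)\sin^k s\,\cos^{n-k}\!s\,ds=h_{n,k}(\tau,\mu),\qquad k=0,\ldots,n,
\end{equation*}
one per degree $n$. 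Since $\{\sin^k s\cos^{n-k}s\}_{k=0}^n$ are linearly independent trigonometric polynomials on $[0,2\pi]$, one may take $w_n(\cdot,\mu)$ in their $(n+1)$-dimensional span and read off its coefficients as linear functions of $h_{n,k}(\tau,\mu)$; these depend continuously on $(\tau,\mu)$ and so are bounded uniformly on any compact parameter region. The resulting $V$ is then polynomial in $q$ with bounded time-dependent coefficients, and the Gronwall remainder from the averaging step supplies the claimed uniform $O(h^2)$ bound on $U$.

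The analytic passage to the rotating frame and the averaging estimate should be routine; I expect the substantive step to be the inverse problem in the third paragraph. The key insight there is the separation-of-variables ansatz $V_n=q^n w_n(s,\mu)$, which decouples the matching equations by total degree and reduces what looks a priori like an infinite-dimensional problem to a sequence of finite-dimensional linear systems whose solvability is just the (easy) linear independence of $\{\sin^k s\cos^{n-k}s\}_{k=0}^n$.
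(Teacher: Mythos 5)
Your argument is correct in all essentials, but note that the paper does not actually prove this proposition: its ``proof'' is a one-line citation to Lemma~1 of Turaev (2002), of which the statement is the two-dimensional case. What you have written is, in effect, a reconstruction of the standard proof behind that citation: pass to the rotating frame where the harmonic part is trivial (using $e^{2\pi A}=I$ and the symplecticity of $e^{At}$ to keep the Hamiltonian structure), expand the time-$2\pi$ map to first order in $h$ via Gronwall plus variation of parameters, and then solve the inverse (moment) problem for $V$ so that the averaged Hamiltonian $2\pi\bar K$ reproduces $H(\tau,\cdot,\cdot,\mu)$ up to a function of $\mu$ alone, which is irrelevant to the vector field. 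The decisive step is the one you identify: the degree-by-degree ansatz $V_n=q^n w_n(s,\mu)$ with $w_n$ in the span of $\{\sin^k s\cos^{n-k}s\}_{k=0}^{n}$ reduces matching to inverting the Gram matrix of these functions, which is positive definite precisely because they are linearly independent on $[0,2\pi]$ (a homogeneous polynomial vanishing on the unit circle would have to be divisible by $x^2+y^2-1$, which is impossible by degree count). Two small points worth making explicit if you write this up: the resulting $w_n$ are trigonometric polynomials in $s$ with coefficients depending linearly, hence continuously, on the $h_{n,k}(\tau,\mu)$, which gives both the smoothness of $f_2$ needed to speak of its flow and the uniformity of the constant $C$ over the compact $\tau$-interval and the bounded $\mu$-range; and the Gronwall step needs the a priori confinement of $Z(s)$ to a fixed compact enlargement of $U$ for all small $h$, which follows from the boundedness of $X_K$ there by a standard continuation argument. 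Compared with the paper, your route buys a self-contained proof at the cost of redoing work the authors outsource; it also makes transparent that $V$ can be chosen independent of $h$, which the cited statement leaves implicit.
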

\begin{proof}
The proposition is the 2-dimensional case of \citep[ Lemma 1]{turaev2002polynomial}.
\end{proof}
With Proposition \ref{lem:approximation hamiltonian}, we can approximate the flow maps with 2-Hamiltonian vector fields, which give rise to the following lemma.

\begin{lemma}\label{lem:two-Hami}
Given compact $U\subset \R^D$ and $\varphi_{\tau, T, f}\in \mathcal{F}(U)$. If the vector fields $f$ is $2$-Hamiltonian in the $d,d+1$-th variables with $1 \leq d \leq D-1$, then,
\begin{equation*}
\varphi_{\tau,T,f}\in \overline{\Psi}_{U}.
\end{equation*}
\end{lemma}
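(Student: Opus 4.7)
The plan is to use Lemma \ref{lem:diserror} as the main engine: to conclude $\varphi_{\tau,T,f}\in\overline{\Psi}_U$, it suffices to show that for every sufficiently small step $h>0$ and every $\tau'\in[\tau,\tau+T]$, one can produce (uniformly on any compact set) an element of $\Psi$ that approximates $\varphi_{\tau',h,f}$ up to $O(h^2)$. Without loss of generality I would set $d=1$ and write $y=(p,q,\mu)$ with $p=y_1$, $q=y_2$, $\mu=(y_3,\ldots,y_D)^\top$, so that the vector field has the form $f=(-\partial H/\partial q,\,\partial H/\partial p,\,0,\ldots,0)^\top$ for some smooth Hamiltonian $H(t,p,q,\mu)$, with $\mu$ a frozen parameter along the flow.

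The one-step approximation would then be built in three layers. First, on the relevant compact $(p,q,\mu)$-domain I would approximate $H$ by a Hamiltonian $\tilde H$ that is polynomial in $p,q$ (with $\mu$-dependence absorbed as a bounded parameter), making the $C^1$-distance between $H$ and $\tilde H$ of order $h$; a Gronwall estimate for ODE flows then yields $\|\varphi_{\tau',h,f}-\varphi_{\tau',h,\tilde f}\|=O(h^2)$. Second, since $\tilde H$ is polynomial in $p,q$, Proposition \ref{lem:approximation hamiltonian} produces a scalar $V(t,q,\mu)$, polynomial in $q$, such that the time-$2\pi$ flow of the system with vector field $f_2=(-q-h\,\partial V/\partial q(t,q,\mu),\,p,\,0,\ldots,0)^\top$ approximates $\varphi_{\tau',h,\tilde f}$ up to $O(h^2)$ uniformly on the compact set. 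Crucially, $f_2$ is \emph{separable} $2$-Hamiltonian in the $1,2$-th variables (its first component depends only on $(t,q,\mu)$ and its second only on $p$), so Lemma \ref{lem:one-Hami} gives $\varphi_{0,2\pi,f_2}\in\overline{\Psi}_{U'}$ on any compact $U'$, which lets me pick $\psi\in\Psi$ within $O(h^2)$ of it. A triangle inequality combines the three layers into the single bound $\|\varphi_{\tau',h,f}-\psi\|_{U'}\leq Ch^2$, and Lemma \ref{lem:diserror} closes the argument.

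The main obstacle I anticipate is the bookkeeping of orders in the first layer: the polynomial approximation $\tilde H$ of $H$ has to be refined as $h\to 0$ (taking $\|\tilde H-H\|_{C^1}\lesssim h$), and one has to justify that the resulting Gronwall estimate for the flow difference really delivers $O(h^2)$ uniformly in $\tau'\in[\tau,\tau+T]$ and uniformly on a compact neighborhood that contains all the intermediate orbits and their approximations. A secondary point is that Proposition \ref{lem:approximation hamiltonian} is stated in the planar case, so one needs to treat $\mu$ strictly as a bounded parameter that is unchanged by both flows; this is consistent with the zero entries in $f$ and $f_2$, but should be spelled out. Once these two technicalities are handled cleanly, the rest is triangle inequalities and the direct appeal to Lemma \ref{lem:one-Hami} and Lemma \ref{lem:diserror}.
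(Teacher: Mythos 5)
Your proposal is correct and follows essentially the same route as the paper: reduce to $d=1$, approximate $H$ by a Hamiltonian polynomial in $p,q$ with $C^1$-error $O(h)$ so the one-step flows differ by $O(h^2)$ (the paper gets this via a comparison with the explicit Euler map rather than Gronwall, but the estimate is the same), invoke Proposition \ref{lem:approximation hamiltonian} to replace the polynomial Hamiltonian flow by the time-$2\pi$ flow of a separable $2$-Hamiltonian field $f_2$, apply Lemma \ref{lem:one-Hami} to that flow, and close with the triangle inequality and Lemma \ref{lem:diserror}. The two technical points you flag (refining the polynomial approximation as $h\to 0$, and treating $\mu$ as a frozen bounded parameter in the planar Proposition \ref{lem:approximation hamiltonian}) are exactly how the paper handles them.
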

\begin{proof}
Without loss of generality, we assume $d=1$. The relation between $x$ and $\varphi_{\tau,T,f}(x)$ is characterized by the following equation,
\begin{equation}\label{eq:2sode}
\begin{aligned}
\frac{d}{dt}y(t)= f(t,y(t)), \quad y(\tau)=x,\ y(\tau+T) = \varphi_{\tau,T,f}(x).
\end{aligned}
\end{equation}
For $y = (y_1,y_2,y_3,\cdots, y_d)$, denote $p=y_1, q=y_2, \mu=(y_3,\cdots, y_d)^\top$. Since $f$ is $2$-Hamiltonian in the $1,2$-th variables, there exists a scalar function $H:  [0,+\infty)\times\R^D \rightarrow \R$ such that equation (\ref{eq:2sode}) can be written as
\begin{equation*}
\begin{aligned}
\frac{d}{dt}p(t) &= -\frac{\partial H}{\partial q}(t,p,q,\mu), \\
\frac{d}{dt}q(t) &=  \frac{\partial H}{\partial p}(t,p,q,\mu), \\
\frac{d}{dt}\mu(t) &=0.
\end{aligned}
\end{equation*}

On any compact $U'$, since polynomials are dense among smooth functions, for any sufficiently small step $h >0$, there exists $H_{poly}$, polynomial in $p,q$, such that
\begin{equation*}
\norm{\frac{\partial H}{\partial q}-\frac{\partial H_{ploy}}{\partial q}}_{[\tau, \tau+T]\times U'} +\norm{\frac{\partial H}{\partial p}-\frac{\partial H_{ploy}}{\partial p}}_{[\tau, \tau+T]\times U'}\leq h.
\end{equation*}
Consider the Hamiltonian system with Hamiltonian $H_{ploy}$, i.e.,
\begin{equation}\label{eq:2shami1}
\begin{aligned}
\frac{d}{dt}p(t) &= -\frac{\partial H_{ploy}}{\partial q}(t,p,q,\mu), \\
\frac{d}{dt}q(t) &=  \frac{\partial H_{ploy}}{\partial p}(t,p,q,\mu), \\
\frac{d}{dt}\mu(t) &=0.
\end{aligned}
\end{equation}
Denote $f_1 = (-\frac{\partial H_{ploy}}{\partial q}, \frac{\partial H_{ploy}}{\partial p}, 0)^\top$, for $\tau'\in [\tau, \tau+T]$, the time-$h$ flow map of (\ref{eq:2shami1}) starting at $\tau'$ can be written as $\varphi_{\tau',h,f_1}$. Due to the difference between $f$ and $f_1$, there is a constant $C_1$ such that
\begin{equation*}
\begin{aligned}
&\norm{\varphi_{\tau',h,f_1} - \varphi_{\tau',h,f}}_{U'}\\
\leq& \norm{\varphi_{\tau',h,f_1} - \Phi_{\tau',h,f_1}^e}_{U'}+
\norm{\Phi_{\tau',h,f_1}^e - \Phi_{\tau',h,f}^e}_{U'}+
\norm{\Phi_{\tau',h,f}^e - \varphi_{\tau',h,f}}_{U'}\\
\leq& C_1h^2.
\end{aligned}
\end{equation*}

According to Proposition \ref{lem:approximation hamiltonian}, there exists a function $V(t,q,\mu)$ polynomial in $q$ and a Hamiltonian system of the form
\begin{equation}\label{eq:2shami2}
\begin{aligned}
\frac{d}{dt}p(t) &= -q-h \cdot \frac{\partial V}{\partial q}(t,q,\mu), \\
\frac{d}{dt}q(t) &=  p,\\
\frac{d}{dt}\mu(t) &=0,
\end{aligned}
\end{equation}
such that,  the time-$2\pi$ map of (\ref{eq:2shami2}), denoted as $\varphi_{0,2\pi,f_2}$ with $f_2=(-q-h \frac{\partial V}{\partial q}(t,q,\mu),p,0)^\top$, satisfies
\begin{equation*}
\norm{\varphi_{0,2\pi,f_2} - \varphi_{\tau',h, f_1}}_{U'}\leq C_2 h^2
\end{equation*}
with constant $C_2$. Hence,
\begin{equation*}
\begin{aligned}
\norm{\varphi_{0,2\pi,f_2} - \varphi_{\tau',h, f}}_{U'}&\leq \norm{\varphi_{0,2\pi,f_2} - \varphi_{\tau',h, f_1}}_{U'}+\norm{\varphi_{\tau',h,f_1} - \varphi_{\tau',h, f}}_{U'}\\&\leq (C_1+C_2) h^2.
\end{aligned}
\end{equation*}
Subsequently, by Lemma \ref{lem:one-Hami}, there exists $v \in \Psi$ such that
\begin{equation*}
\begin{aligned}
\norm{v - \varphi_{\tau',h, f}}_{U'}&\leq \norm{\varphi_{0,2\pi,f_2} - v}_{U'}+\norm{\varphi_{\tau',h,f_2} - \varphi_{\tau',h, f}}_{U'}\\&\leq (C_1+C_2+1) h^2.
\end{aligned}
\end{equation*}
The lemma is completed as a consequence of Lemma \ref{lem:diserror}.
\end{proof}

\begin{proposition}\label{thm:dfode}
If $f: [0, +\infty)\times\R^D \rightarrow \R^D$ obeys $\dive f =0$, then $f$ can be written as the sum of $D-1$ vector fields
\begin{equation*}
f=f_{1,2}+f_{2,3}+ \cdots +f_{D-1,D},
\end{equation*}
where each $f_{d,d+1}$ is $2$-Hamiltonian in the $d,d+1$-th variables for $1\leq d \leq D-1$. Furthermore, if $f$ is smooth, $f_{d,d+1}$ is smooth.
\end{proposition}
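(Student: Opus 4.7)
The plan is to prove the decomposition by induction on the ambient dimension $D$, treating $t$ and, in the inductive step, also $y_1$ as parameters that ride along through the construction.

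For the base case $D=2$, the claim is the classical stream-function representation: any planar divergence-free field $f=(f_1,f_2)^\top$ equals $(-\partial H/\partial y_2,\partial H/\partial y_1)^\top$ for a single scalar $H$. I would exhibit $H$ by a line integral, for example
\[
H(t,y_1,y_2)=\int_0^{y_1} f_2(t,s,0)\,ds-\int_0^{y_2} f_1(t,y_1,s)\,ds.
\]
Differentiating in $y_2$ immediately gives $\partial H/\partial y_2=-f_1$. Differentiating in $y_1$ produces $f_2(t,y_1,0)-\int_0^{y_2}\partial_{y_1} f_1(t,y_1,s)\,ds$, and invoking the divergence-free relation $\partial_{y_1}f_1=-\partial_{y_2}f_2$ turns the integrand into $\partial_{y_2}f_2$, whose integral equals $f_2(t,y_1,y_2)-f_2(t,y_1,0)$ and cancels the boundary term, delivering exactly $f_2$.

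For the inductive step I would set
\[
H_{1,2}(t,y)=-\int_0^{y_2} f_1(t,y_1,s,y_3,\dots,y_D)\,ds
\]
and let $f_{1,2}$ be the 2-Hamiltonian field in the first two variables associated to it, so that by construction the first component of the residual $r:=f-f_{1,2}$ vanishes. A short calculation using $\partial^2 H_{1,2}/(\partial y_1\partial y_2)=-\partial_{y_1}f_1$ shows $\dive r=\dive f=0$, and since $r_1\equiv 0$ this forces $\sum_{k=2}^{D}\partial r_k/\partial y_k=0$. Thus $(r_2,\dots,r_D)$ is a divergence-free field in the $D-1$ variables $(y_2,\dots,y_D)$, with $(t,y_1)$ demoted to parameters. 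The inductive hypothesis then supplies scalar Hamiltonians $H_{e,e+1}$ and 2-Hamiltonian fields in the $(e,e+1)$-th variables ($e=2,\dots,D-1$) whose sum is $(r_2,\dots,r_D)$; padding each with a zero in the first slot yields the required $f_{e,e+1}$ on $\R^D$, and the decomposition $f=\sum_{d=1}^{D-1}f_{d,d+1}$ follows. Smoothness is transparently preserved since every $H_{d,d+1}$ is obtained by integrating a smooth integrand along a bounded interval.

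The step that demands the most care is the base-case verification that $\partial H/\partial y_1$ returns $f_2$, as this is the single point at which the divergence-free hypothesis is consumed in a nontrivial way; every other algebraic identity, including the preservation of $\dive$ under the residual construction, is the same cancellation in different variables. Once that calculation is in place, the induction reduces to careful bookkeeping of which coordinates are being treated as parameters at each stage, with no additional structural assumption on $f$ beyond smoothness needed to ensure the constructed Hamiltonians are themselves smooth.
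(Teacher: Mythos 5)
Your proposal is correct, and it essentially reconstructs the argument that the paper does not spell out: the paper's ``proof'' of this proposition is only a citation to Feng and Shang (1995), and your telescoping stream-function construction is precisely the classical decomposition from that reference, made explicit. The base case computation is right (the divergence-free relation converts $-\int_0^{y_2}\partial_{y_1}f_1\,ds$ into $f_2(t,y_1,y_2)-f_2(t,y_1,0)$, cancelling the boundary term), and the inductive step is sound: $f_{1,2}$ is automatically divergence-free by equality of mixed partials, so the residual $r$ with $r_1\equiv 0$ is divergence-free in the remaining $D-1$ coordinates with $(t,y_1)$ as parameters, and the paper's definition of ``2-Hamiltonian in the $d,d{+}1$-th variables'' explicitly permits $H$ to depend on all coordinates, so carrying $y_1$ as a parameter causes no trouble. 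The only caveat worth recording is that your construction differentiates $f_1$ in $y_1$ under the integral sign, so it requires $f\in C^1$ rather than bare continuity; since the proposition is only invoked for smooth $f$ in Lemma \ref{lem:divergence-free}, this is harmless.
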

\begin{proof}
The proof can be found in \citep{feng1995volume}.
\end{proof}
Proposition \ref{thm:dfode} is founded by Feng and Shang to develop integrator for divergence-free equations. With the decomposition of Proposition \ref{thm:dfode}, the gap between divergence-free and 2-Hamiltonian vector fields is bridged.
\begin{lemma}\label{lem:divergence-free}
Given compact $U\subset \R^D$ and $\varphi_{\tau, T, f}\in \mathcal{VF}(U)$, then,
\begin{equation*}
\varphi_{\tau,T,f}\in \overline{\Psi}_{U}.
\end{equation*}
Viz., $\mathcal{VF}(U) \subset \overline{\Psi}_{U}$.
\end{lemma}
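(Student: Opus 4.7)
The plan is to combine the decomposition in Proposition \ref{thm:dfode} with the splitting integrator introduced earlier and then invoke Lemma \ref{lem:diserror}. Since $\dive f = 0$, Proposition \ref{thm:dfode} gives smooth vector fields $f_{1,2},\dots,f_{D-1,D}$ with $f=\sum_{d=1}^{D-1}f_{d,d+1}$, each $f_{d,d+1}$ being $2$-Hamiltonian in its designated pair of variables. The natural candidate approximant on a short time step is the first-order splitting method
\begin{equation*}
\Phi^s_{\tau',h,f}(x)=\varphi_{\tau',h,f_{D-1,D}}\circ\cdots\circ\varphi_{\tau',h,f_{1,2}}(x).
\end{equation*}

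First I would fix any auxiliary compact $U'$ and use the order-$1$ property of $\Phi^s$ (stated in Section \ref{sec:Proofs} after Definition $2$) to get a constant $C_1$ with $\norm{\Phi^s_{\tau',h,f}-\varphi_{\tau',h,f}}_{U'}\leq C_1 h^2$ for all $\tau'\in[\tau,\tau+T]$ and sufficiently small $h>0$. Next, Lemma \ref{lem:two-Hami} applies to each factor $\varphi_{\tau',h,f_{d,d+1}}$, giving $\varphi_{\tau',h,f_{d,d+1}}\in\overline{\Psi}_{V}$ on every compact $V$. Because each such flow map is smooth and hence Lipschitz on compact sets, Lemma \ref{lem:comerror} promotes the composition to $\Phi^s_{\tau',h,f}\in\overline{\Psi}_{U'}$, so I can pick $v\in\Psi$ with $\norm{v-\Phi^s_{\tau',h,f}}_{U'}\leq h^2$.

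A triangle inequality then yields $\norm{v-\varphi_{\tau',h,f}}_{U'}\leq (C_1+1)h^2$, uniformly in $\tau'\in[\tau,\tau+T]$ and for every sufficiently small $h>0$ (the constant $C_1$ and the Lipschitz constants of the $\varphi_{\tau',h,f_{d,d+1}}$ can be taken uniform over the compact time interval, since they vary continuously in $\tau'$ and $h$). With this local approximation in hand, Lemma \ref{lem:diserror} directly gives $\varphi_{\tau,T,f}\in\overline{\Psi}_{U}$, completing the proof.

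The main obstacle I anticipate is bookkeeping rather than conceptual: one must carefully select the enlarged compact sets on which the constants $C_1$ and the Lipschitz bounds are uniform, and ensure the composed approximation error for $\Phi^s$ remains $O(h^2)$ rather than accumulating across the $D-1$ splitting factors. This is handled exactly as in the proof of Lemma \ref{lem:one-Hami}: each intermediate flow $\varphi_{\tau',h,f_{d,d+1}}$ has Lipschitz constant $1+O(h)$ on a slightly enlarged compact, so telescoping through the $D-1$ factors only inflates $C_1$ by a factor depending on $D$ and the $C^1$-norms of the $f_{d,d+1}$, both of which are bounded. Once this is verified, the argument is a clean application of the machinery already established.
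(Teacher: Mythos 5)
Your proposal is correct and follows essentially the same route as the paper: decompose $f$ via Proposition \ref{thm:dfode} into $2$-Hamiltonian pieces, approximate the one-step flow by the first-order splitting integrator, upgrade each factor with Lemma \ref{lem:two-Hami} and the composition with Lemma \ref{lem:comerror}, and conclude with Lemma \ref{lem:diserror}. The uniformity bookkeeping you flag is handled in the paper exactly as you describe.
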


\begin{proof}
By Proposition \ref{thm:dfode}, $f$ can be written as the sum of $D-1$ vector fields
\begin{equation*}
f=f_{1,2}+f_{2,3}+ \cdots +f_{D-1,D},
\end{equation*}
where each $f_{d,d+1}$ is $2$-Hamiltonian in the $d,d+1$-th variables. For any compact set $U' \subset \R^D$, any $\tau' \in [\tau, \tau+T]$ and any sufficiently small step $h> 0$, taking the splitting integrator
\begin{equation*}
\phi_{\tau',h} = \varphi_{\tau',h,f_{D-1,D}}\circ \cdots \circ \varphi_{\tau', h,f_{1,2}}
\end{equation*}
implies
\begin{equation*}
\norm{\varphi_{\tau', h, f} - \phi_{\tau',h}}_{U'} \leq Ch^2.
\end{equation*}

In addition, for any compact $V$, due to Lemma \ref{lem:two-Hami}, we have
\begin{equation*}
\varphi_{\tau', h,f_{d, d+1}} \in \overline{\Psi}_{V},
\end{equation*}
which implies 
$\phi_{\tau',h} \in \overline{\Psi}_{U'}$ according to Lemma \ref{lem:comerror}. Therefore there exists $v \in \Psi$ such that
\begin{equation*}
\norm{\varphi_{\tau', h, f} - v}_{U'} \leq \norm{v - \phi_{\tau',h}}_{U'} + \norm{\varphi_{\tau', h, f} - \phi_{\tau',h}}_{U'} \leq (C+1)h^2.
\end{equation*}
By Lemma \ref{lem:diserror}, we obtain
\begin{equation*}
\varphi_{\tau,T,f} \in \overline{\Psi}_{U},
\end{equation*}
which concludes the proof.
\end{proof}

\subsection{Proof of Theorem \ref{thm:main}}\label{sec:Proof of Theorem}
\begin{proposition}\label{lem:map-flow}
Suppose that $Q \in \R^D$ is an open cube and that $1 \leq p < + \infty$. For every measure-preserving map $\psi:\overline{Q}\rightarrow \overline{Q}$ and arbitrary $\varepsilon>0$, there exists a time-$1$ flow map $\varphi_{0,1,f} \in \mathcal{VF}(Q)$ where $f$ is compactly supported in $(0,1) \times Q$ such that
\begin{equation*}
    \norm{\psi - \varphi_{0,1,f}}_{L^p(Q)}\leq \varepsilon.
\end{equation*}
\end{proposition}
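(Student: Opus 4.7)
The plan is to approximate $\psi$ by three successive simplifications: first by a piecewise rigid translation realizing a permutation of sub-cubes (a Lax-type combinatorial approximation), then by a smooth measure-preserving diffeomorphism equal to the identity near $\partial Q$, and finally by writing this diffeomorphism as the time-$1$ flow of a compactly-supported divergence-free vector field obtained from a smoothly reparametrized isotopy. For the first step, I would partition $\overline{Q}$ into $N=k^D$ congruent closed sub-cubes $C_1,\dots,C_N$ of side $\ell=\mathrm{side}(Q)/k$ and use that, because $\psi$ is measure-preserving, the matrix $A_{ij}=\mathcal{H}[C_i\cap\psi^{-1}(C_j)]/\mathcal{H}[C_i]$ is doubly stochastic; Birkhoff's theorem then provides a permutation $\sigma\in S_N$ close to $A$, and for the rigid-translation map $\psi_\sigma$ sending $C_i$ to $C_{\sigma(i)}$, the error $\norm{\psi-\psi_\sigma}_{L^p(Q)}$ can be driven below $\varepsilon/3$ by choosing $k$ large, using that the displacement on "matched" sets is at most $\sqrt{D}\,\ell$ and the measure of "mismatched" sets vanishes as $k\to\infty$.

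Next, I would realize $\psi_\sigma$ (up to $L^p$-error $\varepsilon/3$) as a smooth measure-preserving diffeomorphism $\Phi\colon\overline{Q}\to\overline{Q}$ that is the identity on a neighborhood of $\partial Q$. Shrinking the decomposition slightly so that all $C_i$ lie inside a sub-cube $Q_0\Subset Q$ leaves a boundary buffer. Because $D\geq 2$, a transposition of two face-adjacent sub-cubes can be implemented exactly by a rotation of angle $\pi$ in the $2$-plane through their centers, tapered via a radial cut-off $\chi(r)$ to the identity outside a small enclosing disk lying in $Q_0$; this is measure-preserving, and the generating vector field $\pi\chi(r)\hat{\phi}$, extended trivially in the remaining coordinates, is smooth, compactly supported, and divergence-free. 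Decomposing $\sigma$ into adjacent transpositions and composing the rotations produces $\Phi$, with the extra $L^p$-leakage onto non-swapped cubes controlled by batching mutually spatially-disjoint transpositions into a single rotation layer, so the total leakage stays $O(\ell)$ uniformly in $k$.

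Finally, to obtain $f$, I would concatenate the $M$ rotations on disjoint sub-intervals of $[0,1]$, each rescaled by the factor $M$, producing a time-dependent divergence-free vector field $f_0(t,x)$ on $[0,1]\times Q_0$ whose time-$1$ flow is exactly $\Phi$. A smooth reparametrization $\rho\colon[0,1]\to[0,1]$ with $\rho(0)=0$, $\rho(1)=1$ and all derivatives vanishing at the endpoints converts $f_0$ into $f(t,x)=\rho'(t)\,f_0(\rho(t),x)$; this preserves the time-$1$ flow, remains divergence-free, and extends by zero to a smooth vector field compactly supported in $(0,1)\times Q$, delivering $\varphi_{0,1,f}$ with $\norm{\psi-\varphi_{0,1,f}}_{L^p(Q)}<\varepsilon$. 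The main obstacle I anticipate is the second step: controlling the accumulation of $L^p$-leakage from the rotational swaps as the number $M$ of adjacent transpositions needed to express $\sigma$ grows (possibly as $O(N^2)$ in worst cases). If the batching argument above proves insufficient, an alternative is to invoke the known contractibility of the group of compactly-supported measure-preserving diffeomorphisms of a cube in dimension $D\geq 2$, which directly produces an isotopy from the identity to any $\Phi$ fixing $\partial Q$ and yields the required vector field by differentiation, bypassing the combinatorial decomposition entirely.
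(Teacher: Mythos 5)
The paper does not actually prove this proposition: it is quoted verbatim from Brenier and Gangbo (2003, Corollary 1.1). You are therefore attempting something substantially harder than what the paper records. Your overall architecture (a Lax-type permutation of sub-cubes, realization of the permutation by tapered rotations generated by divergence-free fields, concatenation and smooth time-reparametrization into a single field compactly supported in $(0,1)\times Q$) is indeed the standard route to results of this type, and Steps 2 and 3 are plausible at the level of a sketch. The problem is Step 1.

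The gap: Birkhoff's theorem does not provide a permutation ``close to'' the doubly stochastic matrix $A$; it only provides a convex decomposition, from which one can extract a permutation $\sigma$ with $A_{i\sigma(i)}>0$ for every $i$, and nothing more. For a merely Borel measure-preserving $\psi$ --- which is all the proposition assumes, and all that is available in the paper's application, since the extension $\tilde\psi$ built in the proof of Theorem \ref{thm:main} is discontinuous --- the measure of the mismatched set $\bigcup_i\bigl(C_i\setminus\psi^{-1}(C_{\sigma(i)})\bigr)$ equals $\mathcal{H}[Q]-\ell^{D}\sum_i A_{i\sigma(i)}$ and need not vanish as $k\to\infty$: if $\psi$ spreads each $C_i$ across many cells (e.g.\ the non-injective model $x\mapsto Nx \bmod 1$ in each coordinate, or a bijection that mixes at the scale of the partition), then $A$ is close to the uniform matrix and every permutation satisfies $\sum_i A_{i\sigma(i)}=o(N)$, so the matched set has vanishing rather than full measure and $\norm{\psi-\psi_\sigma}_{L^p(Q)}$ stays bounded away from zero. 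The classical Lax argument avoids this only when $\psi$ is continuous, because then $\psi(C_i)$ has small diameter and the error is small on all of $C_i$ irrespective of how the overlap measure is distributed. To cover general measure-preserving Borel maps you need an additional ingredient --- for instance Lusin's theorem to reduce to the continuous case off a set of small measure, or the extreme-point structure of doubly stochastic measures exploited by Brenier and Gangbo. Your fallback in Step 2 (connectedness of the group of compactly supported volume-preserving diffeomorphisms) is sensible but repairs a different step; as written, the proof fails at Step 1.
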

\begin{proof}
The proof can be found in \citep[Corollary 1.1]{brenier2003p}.
\end{proof}

With these results, we are able to provide the proof of the main theorems.
\begin{proof}[Proof of Theorem \ref{thm:main}]
For compact $U \subset \R^D$, we can take $a \in \R^D$ satisfying $U \cap( \psi (U)+a) = \varnothing$. Let $Q$ be a open cube large enough such that $U,  \psi (U)+a \subset Q$, and define $\tilde{\psi}$ on $Q$ by
\begin{equation*}
\tilde{\psi}(x) = \left\{
\begin{aligned}
&\psi(x)+a, \quad &&\text{if } x \in U,\\
&\psi^{-1}(x-a), &&\text{if } x \in \psi(U)+a ,\\
&x, \quad &&\text{if } x \in Q\setminus (U \cup (\psi(U)+a)).
\end{aligned}\right.
\end{equation*}
Here, $\tilde{\psi}:Q\rightarrow Q$ is measure-preserving. According to Proposition \ref{lem:map-flow} there exists a time-$1$ flow map $\varphi_{0,1,f} \in \mathcal{VF}(Q)$ such that
\begin{equation*}
    \norm{\tilde{\psi} - \varphi_{0,1,f}}_{L^p(Q)}\leq \varepsilon,
\end{equation*}
and $f$ is compactly supported in $(0,1)\times Q$. Using Lemma \ref{lem:divergence-free} we deduce that there exists a measure-preserving neural network $\psi_{net} \in \Psi$ such that
\begin{equation*}
    \norm{\varphi_{0,1,f}-\psi_{net}}_{L^p(Q)}\leq \varepsilon.
\end{equation*}
By these estimations, we obtain
\begin{equation*}
    \norm{\psi_{net}-\tilde{\psi}}_{L^p(U)} = \norm{\psi_{net}-a-\psi}_{L^p(U)}\leq 2\varepsilon,
\end{equation*}
and thus $\psi \in\overline{\Psi}_{L^p(U)}$ since $\psi_{net}-a\in \overline{\Psi}_U$. Hence, the theorem has been completed.
\end{proof}

\section{Summary}\label{sec:Summary}
The main contribution of this paper is to prove the approximation capabilities of measure-preserving neural networks. These results serve the mathematical foundations of existing measure-preserving neural networks such as NICE \citep{dinh2015nice} and RevNets \citep{gomez2017the}.

The key idea is introducing flow maps from the perspective of dynamical systems. Via investigation of approximation aspects of two special measure-preserving maps, i.e, flow maps of 2-Hamiltonian and separable 2-Hamiltonian vector fields, we show that every measure-preserving map can be approximated in $C$-norm by measure-preserving neural networks. Finally, by the $L^p$-norm approximation 
proposition which connects measure-preserving flow maps and general measure-preserving maps, we conclude the main theorem.

One open question is the $C$-norm approximation of Corollary \ref{cor:1}. This issue is essentially the gap between measure-preserving flow map and general measure-preserving map. We conjecture that Proposition \ref{lem:map-flow} can be further improved to provide $C$-norm approximation under additional assumptions of measure-preserving map. This paper also shows the effectiveness of understanding deep learning via dynamical systems. Exploring approximation aspects of other structured neural networks
via flow map might be another interesting direction.

\section*{Acknowledgments}
This research is supported by the Major Project on New Generation of Artificial Intelligence from MOST of China (Grant No. 2018AAA0101002), and National Natural Science Foundation of China (Grant No. 12171466).

\appendix

\section{Experimental details} \label{app:Experimental details}
We consider a divergence-free dynamical system given as 
\begin{equation*}
\begin{aligned}
\dot{y}_1 &= y_3,\\
\dot{y}_2 &= y_4,\\
\dot{y}_3 &= \frac{y_1}{100(y_1^2+y_2^2)^{\frac{3}{2}}}+ (y_1^2+y_2^2)^{\frac{1}{2}}y_4,\\
\dot{y}_4 &= \frac{x_2}{100(y_1^2+y_2^2)^{\frac{3}{2}}}- (y_1^2+y_2^2)^{\frac{1}{2}}y_3.
\end{aligned}
\end{equation*}
This equation describes dynamics of a single charged particle in an electromagnetic field governed by Lorentz force. We can readily check that the governing function is divergence-free and thus its flow map is measure-preserving due to Proposition \ref{pro:divergence free}. The architecture used is a stack of 8 coupling layers with partition $s=2$, where single hidden layer neural network with width of $64$ and sigmoid activation is adopted as control families. We optimize the mean-squared-error loss
\begin{equation*}
\frac{1}{I}\sum_{n=1}^{N} \|\psi_{net} (x_n) - x_{n+1}\|^2
\end{equation*}
for $8\times 10^5$ epochs with Adam optimization and learning rate $0.001$. Here, $\{(x_n,x_{n+1})\}_{n=0}^{N}$ is the training data with $N=199$ and is sampled on the trajectory starting at $(0.1,1,1.1,0.5)$ from $t = 0$ to $t = 40$ using equidistant time step size of $0.2$.

\bibliographystyle{elsarticle-harv}
\bibliography{main}

\begin{thebibliography}{43}
\expandafter\ifx\csname natexlab\endcsname\relax\def\natexlab#1{#1}\fi
\providecommand{\url}[1]{\texttt{#1}}
\providecommand{\href}[2]{#2}
\providecommand{\path}[1]{#1}
\providecommand{\DOIprefix}{doi:}
\providecommand{\ArXivprefix}{arXiv:}
\providecommand{\URLprefix}{URL: }
\providecommand{\Pubmedprefix}{pmid:}
\providecommand{\doi}[1]{\href{http://dx.doi.org/#1}{\path{#1}}}
\providecommand{\Pubmed}[1]{\href{pmid:#1}{\path{#1}}}
\providecommand{\bibinfo}[2]{#2}
\ifx\xfnm\relax \def\xfnm[#1]{\unskip,\space#1}\fi
\bibitem[{Behrmann et~al.(2019)Behrmann, Grathwohl, Chen, Duvenaud and
  Jacobsen}]{behrmann2019invertible}
\bibinfo{author}{Behrmann, J.}, \bibinfo{author}{Grathwohl, W.},
  \bibinfo{author}{Chen, R.T.Q.}, \bibinfo{author}{Duvenaud, D.},
  \bibinfo{author}{Jacobsen, J.H.}, \bibinfo{year}{2019}.
\newblock \bibinfo{title}{Invertible residual networks}, in:
  \bibinfo{booktitle}{Proceedings of the 36th International Conference on
  Machine Learning}, \bibinfo{publisher}{PMLR}, \bibinfo{address}{Long Beach,
  California, USA}. pp. \bibinfo{pages}{573--582}.
\bibitem[{Bottou(2010)}]{bottou2010large}
\bibinfo{author}{Bottou, L.}, \bibinfo{year}{2010}.
\newblock \bibinfo{title}{Large-scale machine learning with stochastic gradient
  descent}, in: \bibinfo{booktitle}{Proceedings of COMPSTAT'2010},
  \bibinfo{publisher}{Physica-Verlag HD}, \bibinfo{address}{Heidelberg}. pp.
  \bibinfo{pages}{177--186}.
\bibitem[{Bottou and Bousquet(2007)}]{bottou2007tradeoff}
\bibinfo{author}{Bottou, L.}, \bibinfo{author}{Bousquet, O.},
  \bibinfo{year}{2007}.
\newblock \bibinfo{title}{The tradeoffs of large scale learning}, in:
  \bibinfo{booktitle}{Advances in Neural Information Processing Systems 20,
  Proceedings of the Twenty-First Annual Conference on Neural Information
  Processing Systems, Vancouver, British Columbia, Canada, December 3-6, 2007},
  \bibinfo{publisher}{Curran Associates, Inc.}. pp. \bibinfo{pages}{161--168}.
\bibitem[{Brenier(1991)}]{brenier1991polar}
\bibinfo{author}{Brenier, Y.}, \bibinfo{year}{1991}.
\newblock \bibinfo{title}{Polar factorization and monotone rearrangement of
  vector-valued functions}.
\newblock \bibinfo{journal}{Communications on pure and applied mathematics}
  \bibinfo{volume}{44}, \bibinfo{pages}{375--417}.
\bibitem[{Brenier and Gangbo(2003)}]{brenier2003p}
\bibinfo{author}{Brenier, Y.}, \bibinfo{author}{Gangbo, W.},
  \bibinfo{year}{2003}.
\newblock \bibinfo{title}{${L}^p $ approximation of maps by diffeomorphisms}.
\newblock \bibinfo{journal}{Calculus of Variations and Partial Differential
  Equations} \bibinfo{volume}{16}, \bibinfo{pages}{147--164}.
\bibitem[{Chen and Tao(2021)}]{chen2021data}
\bibinfo{author}{Chen, R.}, \bibinfo{author}{Tao, M.}, \bibinfo{year}{2021}.
\newblock \bibinfo{title}{Data-driven prediction of general hamiltonian
  dynamics via learning exactly-symplectic maps}, in:
  \bibinfo{booktitle}{Proceedings of the 38th International Conference on
  Machine Learning, {ICML} 2021}, \bibinfo{publisher}{{PMLR}}. pp.
  \bibinfo{pages}{1717--1727}.
\bibitem[{Chen et~al.(2019)Chen, Behrmann, Duvenaud and
  Jacobsen}]{chen2019residual}
\bibinfo{author}{Chen, T.Q.}, \bibinfo{author}{Behrmann, J.},
  \bibinfo{author}{Duvenaud, D.}, \bibinfo{author}{Jacobsen, J.},
  \bibinfo{year}{2019}.
\newblock \bibinfo{title}{Residual flows for invertible generative modeling},
  in: \bibinfo{booktitle}{Advances in Neural Information Processing Systems},
  pp. \bibinfo{pages}{9913--9923}.
\bibitem[{Chen et~al.(2018)Chen, Rubanova, Bettencourt and
  Duvenaud}]{chen2018neural}
\bibinfo{author}{Chen, T.Q.}, \bibinfo{author}{Rubanova, Y.},
  \bibinfo{author}{Bettencourt, J.}, \bibinfo{author}{Duvenaud, D.},
  \bibinfo{year}{2018}.
\newblock \bibinfo{title}{Neural ordinary differential equations}, in:
  \bibinfo{booktitle}{Advances in Neural Information Processing Systems 31:
  Annual Conference on Neural Information Processing Systems 2018, NeurIPS
  2018, December 3-8, 2018, Montr{\'{e}}al, Canada}, pp.
  \bibinfo{pages}{6572--6583}.
\bibitem[{Cybenko(1989)}]{cybenko1989approximation}
\bibinfo{author}{Cybenko, G.}, \bibinfo{year}{1989}.
\newblock \bibinfo{title}{Approximation by superpositions of a sigmoidal
  function}.
\newblock \bibinfo{journal}{Mathematics of control, signals and systems}
  \bibinfo{volume}{2}, \bibinfo{pages}{303--314}.
\bibitem[{Dinh et~al.(2015)Dinh, Krueger and Bengio}]{dinh2015nice}
\bibinfo{author}{Dinh, L.}, \bibinfo{author}{Krueger, D.},
  \bibinfo{author}{Bengio, Y.}, \bibinfo{year}{2015}.
\newblock \bibinfo{title}{{NICE:} non-linear independent components
  estimation}, in: \bibinfo{booktitle}{3rd International Conference on Learning
  Representations, {ICLR} 2015, San Diego, CA, USA, May 7-9, 2015, Workshop
  Track Proceedings}.
\bibitem[{Dinh et~al.(2017)Dinh, Sohl{-}Dickstein and Bengio}]{dinh2017density}
\bibinfo{author}{Dinh, L.}, \bibinfo{author}{Sohl{-}Dickstein, J.},
  \bibinfo{author}{Bengio, S.}, \bibinfo{year}{2017}.
\newblock \bibinfo{title}{Density estimation using real {NVP}}, in:
  \bibinfo{booktitle}{5th International Conference on Learning Representations,
  {ICLR} 2017, Toulon, France, April 24-26, 2017, Conference Track
  Proceedings}, \bibinfo{publisher}{OpenReview.net}.
\bibitem[{Dupont et~al.(2019)Dupont, Doucet and Teh}]{dupont2019augmented}
\bibinfo{author}{Dupont, E.}, \bibinfo{author}{Doucet, A.},
  \bibinfo{author}{Teh, Y.W.}, \bibinfo{year}{2019}.
\newblock \bibinfo{title}{Augmented neural odes}, in:
  \bibinfo{booktitle}{Advances in Neural Information Processing Systems 32:
  Annual Conference on Neural Information Processing Systems 2019, NeurIPS
  2019, December 8-14, 2019, Vancouver, BC, Canada}, pp.
  \bibinfo{pages}{3134--3144}.
\bibitem[{E(2017)}]{e2017proposal}
\bibinfo{author}{E, W.}, \bibinfo{year}{2017}.
\newblock \bibinfo{title}{A proposal on machine learning via dynamical
  systems}.
\newblock \bibinfo{journal}{Communications in Mathematics and Statistics}
  \bibinfo{volume}{5}, \bibinfo{pages}{1--11}.
\bibitem[{E et~al.(2019)E, Han and Li}]{e2019mean}
\bibinfo{author}{E, W.}, \bibinfo{author}{Han, J.}, \bibinfo{author}{Li, Q.},
  \bibinfo{year}{2019}.
\newblock \bibinfo{title}{A mean-field optimal control formulation of deep
  learning}.
\newblock \bibinfo{journal}{Research in the Mathematical Sciences}
  \bibinfo{volume}{6}, \bibinfo{pages}{1--41}.
\bibitem[{Feng and Shang(1995)}]{feng1995volume}
\bibinfo{author}{Feng, K.}, \bibinfo{author}{Shang, Z.}, \bibinfo{year}{1995}.
\newblock \bibinfo{title}{Volume-preserving algorithms for source-free
  dynamical systems}.
\newblock \bibinfo{journal}{Numerische Mathematik} \bibinfo{volume}{71},
  \bibinfo{pages}{451--463}.
\bibitem[{Fiori(2011a)}]{fiori2011numerical}
\bibinfo{author}{Fiori, S.}, \bibinfo{year}{2011}a.
\newblock \bibinfo{title}{Extended {Hamiltonian} learning on {Riemannian}
  manifolds: {Numerical} aspects}.
\newblock \bibinfo{journal}{IEEE Transactions on Neural Networks and Learning
  Systems} \bibinfo{volume}{23}, \bibinfo{pages}{7--21}.
\bibitem[{Fiori(2011b)}]{fiori2011theoretical}
\bibinfo{author}{Fiori, S.}, \bibinfo{year}{2011}b.
\newblock \bibinfo{title}{Extended {Hamiltonian} learning on {Riemannian}
  manifolds: {Theoretical} aspects}.
\newblock \bibinfo{journal}{IEEE transactions on neural networks}
  \bibinfo{volume}{22}, \bibinfo{pages}{687--700}.
\bibitem[{Gomez et~al.(2017)Gomez, Ren, Urtasun and Grosse}]{gomez2017the}
\bibinfo{author}{Gomez, A.N.}, \bibinfo{author}{Ren, M.},
  \bibinfo{author}{Urtasun, R.}, \bibinfo{author}{Grosse, R.B.},
  \bibinfo{year}{2017}.
\newblock \bibinfo{title}{The reversible residual network: Backpropagation
  without storing activations}, in: \bibinfo{booktitle}{Advances in Neural
  Information Processing Systems 30: Annual Conference on Neural Information
  Processing Systems 2017, 4-9 December 2017, Long Beach, CA, {USA}}, pp.
  \bibinfo{pages}{2214--2224}.
\bibitem[{Greydanus et~al.(2019)Greydanus, Dzamba and
  Yosinski}]{greydanus2019hamiltonian}
\bibinfo{author}{Greydanus, S.}, \bibinfo{author}{Dzamba, M.},
  \bibinfo{author}{Yosinski, J.}, \bibinfo{year}{2019}.
\newblock \bibinfo{title}{Hamiltonian neural networks}, in:
  \bibinfo{booktitle}{Advances in Neural Information Processing Systems 32:
  Annual Conference on Neural Information Processing Systems 2019, NeurIPS
  2019, December 8-14, 2019, Vancouver, BC, Canada}, pp.
  \bibinfo{pages}{15353--15363}.
\bibitem[{Hairer et~al.(2006)Hairer, Lubich and Wanner}]{hairer2006geometric}
\bibinfo{author}{Hairer, E.}, \bibinfo{author}{Lubich, C.},
  \bibinfo{author}{Wanner, G.}, \bibinfo{year}{2006}.
\newblock \bibinfo{title}{Geometric numerical integration: structure-preserving
  algorithms for ordinary differential equations}. volume~\bibinfo{volume}{31}.
\newblock \bibinfo{publisher}{Springer Science \& Business Media}.
\bibitem[{Hairer et~al.(1993)Hairer, Norsett and Wanner}]{hairer1993solving}
\bibinfo{author}{Hairer, E.}, \bibinfo{author}{Norsett, S.},
  \bibinfo{author}{Wanner, G.}, \bibinfo{year}{1993}.
\newblock \bibinfo{title}{Solving Ordinary Differential Equations I: Nonstiff
  Problems}. volume~\bibinfo{volume}{8}.
\newblock \bibinfo{publisher}{Springer-Verlag, Berlin}.
\bibitem[{He et~al.(2016)He, Zhang, Ren and Sun}]{he2016deep}
\bibinfo{author}{He, K.}, \bibinfo{author}{Zhang, X.}, \bibinfo{author}{Ren,
  S.}, \bibinfo{author}{Sun, J.}, \bibinfo{year}{2016}.
\newblock \bibinfo{title}{Deep residual learning for image recognition}, in:
  \bibinfo{booktitle}{2016 {IEEE} Conference on Computer Vision and Pattern
  Recognition, {CVPR} 2016, Las Vegas, NV, USA, June 27-30, 2016},
  \bibinfo{publisher}{{IEEE} Computer Society}. pp. \bibinfo{pages}{770--778}.
\bibitem[{Hornik et~al.(1990)Hornik, Stinchcombe and
  White}]{hornik1990universal}
\bibinfo{author}{Hornik, K.}, \bibinfo{author}{Stinchcombe, M.},
  \bibinfo{author}{White, H.}, \bibinfo{year}{1990}.
\newblock \bibinfo{title}{Universal approximation of an unknown mapping and its
  derivatives using multilayer feedforward networks}.
\newblock \bibinfo{journal}{Neural Networks} \bibinfo{volume}{3},
  \bibinfo{pages}{551 -- 560}.
\bibitem[{Huang et~al.(2018)Huang, Krueger, Lacoste and
  Courville}]{huang2018neural}
\bibinfo{author}{Huang, C.}, \bibinfo{author}{Krueger, D.},
  \bibinfo{author}{Lacoste, A.}, \bibinfo{author}{Courville, A.C.},
  \bibinfo{year}{2018}.
\newblock \bibinfo{title}{Neural autoregressive flows}, in:
  \bibinfo{booktitle}{Proceedings of the 35th International Conference on
  Machine Learning, {ICML} 2018, Stockholmsm{\"{a}}ssan, Stockholm, Sweden,
  July 10-15, 2018}, \bibinfo{publisher}{{PMLR}}. pp.
  \bibinfo{pages}{2083--2092}.
\bibitem[{Jin et~al.(2020a)Jin, Lu, Tang and Karniadakis}]{jin2020quantifying}
\bibinfo{author}{Jin, P.}, \bibinfo{author}{Lu, L.}, \bibinfo{author}{Tang,
  Y.}, \bibinfo{author}{Karniadakis, G.E.}, \bibinfo{year}{2020}a.
\newblock \bibinfo{title}{Quantifying the generalization error in deep learning
  in terms of data distribution and neural network smoothness}.
\newblock \bibinfo{journal}{Neural Networks} \bibinfo{volume}{130},
  \bibinfo{pages}{85--99}.
\bibitem[{Jin et~al.(2020b)Jin, Zhang, Kevrekidis and
  Karniadakis}]{jin2020learning}
\bibinfo{author}{Jin, P.}, \bibinfo{author}{Zhang, Z.},
  \bibinfo{author}{Kevrekidis, I.G.}, \bibinfo{author}{Karniadakis, G.E.},
  \bibinfo{year}{2020}b.
\newblock \bibinfo{title}{Learning poisson systems and trajectories of
  autonomous systems via poisson neural networks}.
\newblock \bibinfo{journal}{arXiv preprint arXiv:2012.03133} .
\bibitem[{Jin et~al.(2020c)Jin, Zhang, Zhu, Tang and
  Karniadakis}]{jin2020sympnets}
\bibinfo{author}{Jin, P.}, \bibinfo{author}{Zhang, Z.}, \bibinfo{author}{Zhu,
  A.}, \bibinfo{author}{Tang, Y.}, \bibinfo{author}{Karniadakis, G.E.},
  \bibinfo{year}{2020}c.
\newblock \bibinfo{title}{Sympnets: Intrinsic structure-preserving symplectic
  networks for identifying hamiltonian systems}.
\newblock \bibinfo{journal}{Neural Networks} \bibinfo{volume}{132},
  \bibinfo{pages}{166 -- 179}.
\bibitem[{Kingma and Dhariwal(2018)}]{kingma2018glow}
\bibinfo{author}{Kingma, D.P.}, \bibinfo{author}{Dhariwal, P.},
  \bibinfo{year}{2018}.
\newblock \bibinfo{title}{Glow: Generative flow with invertible 1x1
  convolutions}, in: \bibinfo{booktitle}{Advances in Neural Information
  Processing Systems 31: Annual Conference on Neural Information Processing
  Systems 2018, NeurIPS 2018, December 3-8, 2018, Montr{\'{e}}al, Canada}, pp.
  \bibinfo{pages}{10236--10245}.
\bibitem[{Kong and Chaudhuri(2020)}]{kong2020the}
\bibinfo{author}{Kong, Z.}, \bibinfo{author}{Chaudhuri, K.},
  \bibinfo{year}{2020}.
\newblock \bibinfo{title}{The expressive power of a class of normalizing flow
  models}, in: \bibinfo{booktitle}{The 23rd International Conference on
  Artificial Intelligence and Statistics, {AISTATS} 2020, 26-28 August 2020,
  Online [Palermo, Sicily, Italy]}, \bibinfo{publisher}{{PMLR}}. pp.
  \bibinfo{pages}{3599--3609}.
\bibitem[{Krizhevsky et~al.(2012)Krizhevsky, Sutskever and
  Hinton}]{krizhevsky2012imagenet}
\bibinfo{author}{Krizhevsky, A.}, \bibinfo{author}{Sutskever, I.},
  \bibinfo{author}{Hinton, G.E.}, \bibinfo{year}{2012}.
\newblock \bibinfo{title}{Imagenet classification with deep convolutional
  neural networks}, in: \bibinfo{booktitle}{Advances in Neural Information
  Processing Systems 25, December 3-6, 2012, Lake Tahoe, Nevada, United
  States}, pp. \bibinfo{pages}{1106--1114}.
\bibitem[{Krizhevsky et~al.(2017)Krizhevsky, Sutskever and
  Hinton}]{krizhevsky2017imagenet}
\bibinfo{author}{Krizhevsky, A.}, \bibinfo{author}{Sutskever, I.},
  \bibinfo{author}{Hinton, G.E.}, \bibinfo{year}{2017}.
\newblock \bibinfo{title}{Imagenet classification with deep convolutional
  neural networks}.
\newblock \bibinfo{journal}{Commun. {ACM}} \bibinfo{volume}{60},
  \bibinfo{pages}{84--90}.
\bibitem[{LeCun et~al.(2015)LeCun, Bengio and Hinton}]{lecun2015deep}
\bibinfo{author}{LeCun, Y.}, \bibinfo{author}{Bengio, Y.},
  \bibinfo{author}{Hinton, G.E.}, \bibinfo{year}{2015}.
\newblock \bibinfo{title}{Deep learning}.
\newblock \bibinfo{journal}{Nature} \bibinfo{volume}{521},
  \bibinfo{pages}{436--444}.
\bibitem[{Li et~al.(2017)Li, Chen, Tai and Weinan}]{li2017maximum}
\bibinfo{author}{Li, Q.}, \bibinfo{author}{Chen, L.}, \bibinfo{author}{Tai,
  C.}, \bibinfo{author}{Weinan, E.}, \bibinfo{year}{2017}.
\newblock \bibinfo{title}{Maximum principle based algorithms for deep
  learning}.
\newblock \bibinfo{journal}{The Journal of Machine Learning Research}
  \bibinfo{volume}{18}, \bibinfo{pages}{5998--6026}.
\bibitem[{Li et~al.(2020)Li, Lin and Shen}]{li2020deep}
\bibinfo{author}{Li, Q.}, \bibinfo{author}{Lin, T.}, \bibinfo{author}{Shen,
  Z.}, \bibinfo{year}{2020}.
\newblock \bibinfo{title}{Deep learning via dynamical systems: An approximation
  perspective}.
\newblock \bibinfo{journal}{arXiv preprint arXiv:1912.10382} .
\bibitem[{Lu et~al.(2021a)Lu, Jin, Pang, Zhang and
  Karniadakis}]{lu2021learning}
\bibinfo{author}{Lu, L.}, \bibinfo{author}{Jin, P.}, \bibinfo{author}{Pang,
  G.}, \bibinfo{author}{Zhang, Z.}, \bibinfo{author}{Karniadakis, G.E.},
  \bibinfo{year}{2021}a.
\newblock \bibinfo{title}{Learning nonlinear operators via deeponet based on
  the universal approximation theorem of operators}.
\newblock \bibinfo{journal}{Nature Machine Intelligence} \bibinfo{volume}{3},
  \bibinfo{pages}{218--229}.
\bibitem[{Lu et~al.(2021b)Lu, Meng, Mao and Karniadakis}]{lu2021deepxde}
\bibinfo{author}{Lu, L.}, \bibinfo{author}{Meng, X.}, \bibinfo{author}{Mao,
  Z.}, \bibinfo{author}{Karniadakis, G.E.}, \bibinfo{year}{2021}b.
\newblock \bibinfo{title}{Deepxde: A deep learning library for solving
  differential equations}.
\newblock \bibinfo{journal}{SIAM Review} \bibinfo{volume}{63},
  \bibinfo{pages}{208--228}.
\bibitem[{Maas et~al.(2013)Maas, Hannun and Ng}]{maas2013rectifier}
\bibinfo{author}{Maas, A.L.}, \bibinfo{author}{Hannun, A.Y.},
  \bibinfo{author}{Ng, A.Y.}, \bibinfo{year}{2013}.
\newblock \bibinfo{title}{Rectifier nonlinearities improve neural network
  acoustic models}, in: \bibinfo{booktitle}{Proc. icml}, p.~\bibinfo{pages}{3}.
\bibitem[{Rezende and Mohamed(2015)}]{Rezende2015variational}
\bibinfo{author}{Rezende, D.J.}, \bibinfo{author}{Mohamed, S.},
  \bibinfo{year}{2015}.
\newblock \bibinfo{title}{Variational inference with normalizing flows}, in:
  \bibinfo{booktitle}{Proceedings of the 32nd International Conference on
  Machine Learning, {ICML} 2015, Lille, France, 6-11 July 2015},
  \bibinfo{publisher}{JMLR.org}. pp. \bibinfo{pages}{1530--1538}.
\bibitem[{Schmidhuber(2015)}]{schmidhuber2015deep}
\bibinfo{author}{Schmidhuber, J.}, \bibinfo{year}{2015}.
\newblock \bibinfo{title}{Deep learning in neural networks: An overview}.
\newblock \bibinfo{journal}{Neural Networks} \bibinfo{volume}{61},
  \bibinfo{pages}{85--117}.
\bibitem[{Shen et~al.(2021)Shen, Yang and Zhang}]{shen2021neural}
\bibinfo{author}{Shen, Z.}, \bibinfo{author}{Yang, H.}, \bibinfo{author}{Zhang,
  S.}, \bibinfo{year}{2021}.
\newblock \bibinfo{title}{Neural network approximation: Three hidden layers are
  enough}.
\newblock \bibinfo{journal}{Neural Networks} \bibinfo{volume}{141},
  \bibinfo{pages}{160--173}.
\bibitem[{Turaev(2002)}]{turaev2002polynomial}
\bibinfo{author}{Turaev, D.}, \bibinfo{year}{2002}.
\newblock \bibinfo{title}{Polynomial approximations of symplectic dynamics and
  richness of chaos in non-hyperbolic area-preserving maps}.
\newblock \bibinfo{journal}{Nonlinearity} \bibinfo{volume}{16},
  \bibinfo{pages}{123}.
\bibitem[{Zhang et~al.(2020)Zhang, Gao, Unterman and
  Arodz}]{zhang2020approximation}
\bibinfo{author}{Zhang, H.}, \bibinfo{author}{Gao, X.},
  \bibinfo{author}{Unterman, J.}, \bibinfo{author}{Arodz, T.},
  \bibinfo{year}{2020}.
\newblock \bibinfo{title}{Approximation capabilities of neural odes and
  invertible residual networks}, in: \bibinfo{booktitle}{Proceedings of the
  37th International Conference on Machine Learning, {ICML} 2020, 13-18 July
  2020, Virtual Event}, \bibinfo{publisher}{{PMLR}}. pp.
  \bibinfo{pages}{11086--11095}.
\bibitem[{Zhang et~al.(2021)Zhang, Zhang, Kang and Li}]{zhang2021ivpf}
\bibinfo{author}{Zhang, S.}, \bibinfo{author}{Zhang, C.},
  \bibinfo{author}{Kang, N.}, \bibinfo{author}{Li, Z.}, \bibinfo{year}{2021}.
\newblock \bibinfo{title}{ivpf: Numerical invertible volume preserving flow for
  efficient lossless compression}, in: \bibinfo{booktitle}{{IEEE} Conference on
  Computer Vision and Pattern Recognition, {CVPR} 2021, virtual, June 19-25,
  2021}, \bibinfo{publisher}{Computer Vision Foundation / {IEEE}}. pp.
  \bibinfo{pages}{620--629}.

\end{thebibliography}





\end{document}